\def\eqref#1{equation~\ref{#1}}
\def\Eqref#1{Equation~\ref{#1}}
\def\1{\bm{1}}
\DeclareMathAlphabet{\mathsfit}{\encodingdefault}{\sfdefault}{m}{sl}
\SetMathAlphabet{\mathsfit}{bold}{\encodingdefault}{\sfdefault}{bx}{n}
\newtheorem{lemma}{Lemma}
\title{Sample-wise Adaptive Weighting for Transfer Consistency in Adversarial Distillation}
\author{\name Hongsin Lee \email hongsin04@kaist.ac.kr \\
      \addr School of Electrical Engineering\\
      Korea Advanced Institute of Science and Technology (KAIST)
      \AND
      \name Hye Won Chung \email hwchung@kaist.ac.kr \\
      \addr School of Electrical Engineering\\
      Korea Advanced Institute of Science and Technology (KAIST)
        }
\begin{document}

\maketitle

\begin{abstract}
Adversarial distillation in the standard min–max adversarial training framework aims to transfer adversarial robustness from a large, robust teacher network to a compact student.
However, existing work often neglects to incorporate state-of-the-art robust teachers. Through extensive analysis, we find that stronger teachers do not necessarily yield more robust students--a phenomenon known as robust saturation. While typically attributed to capacity gaps, we show that such explanations are incomplete. Instead, we identify adversarial transferability--the fraction of student-crafted adversarial examples that remain effective against the teacher--as a key factor in successful robustness transfer. Based on this insight, we propose Sample-wise Adaptive Adversarial Distillation (SAAD), which reweights training examples by their measured transferability without incurring additional computational cost. Experiments on CIFAR-10, CIFAR-100, and Tiny-ImageNet show that SAAD consistently improves AutoAttack robustness over prior methods.
The code is available at \url{https://github.com/HongsinLee/saad}.
\end{abstract}

\section{Introduction}
Deep neural networks have achieved remarkable success across diverse domains, yet they remain highly susceptible to adversarial perturbations~\citep{FGSM, CW_attack, PGD, athalye2018obfuscated}, posing significant risks in safety-critical applications~\citep{safe2, safe1, wang2023does}. 
In response, a variety of defense strategies have been proposed~\citep{2017_jpeg_defense, cohen2019certified, carmon2019unlabeled, xie2019feature, zhang2022adversarial, jin2023randomized}, among which adversarial training (AT)~\citep{FGSM, PGD} has emerged as a leading method. 
Despite its effectiveness, AT typically requires large-scale models, resulting in a substantial performance gap for lightweight architectures commonly deployed in resource-constrained settings~\citep{PGD}. 
To bridge this gap, AT-based \emph{adversarial distillation (AD)} methods~\citep{ard, iad, rslad, akd, adaad, jung2024peeraid, park2024dynamic, IGDM} have been proposed as a promising approach for transferring the robustness of large teacher models to compact student models.

Despite the promise of AD, many existing studies often neglect to incorporate state-of-the-art robust teachers from standardized benchmarks such as RobustBench~\citep{croce2021robustbench}. A natural expectation is that a more robust teacher would yield a correspondingly robust student. However, as illustrated in \Cref{fig:intro_a}, our experiments reveal that employing stronger teachers can in fact degrade student robustness, contradicting conventional intuition. This surprising result raises a fundamental question: what factors account for the variability in AD performance across teacher models, and why does greater teacher robustness not necessarily lead to more effective robustness transfer? In this work, we address this question by examining the role of \emph{adversarial transferability} in robust knowledge distillation.

Previous studies have attributed the failure of robustness transfer in AD to the \emph{robust saturation effect}~\citep{rslad}, which posits that beyond a certain capacity threshold, further increases in teacher robustness or model size yield diminishing returns for the student. 
However, as shown in \Cref{fig:intro_b}, even when teachers are ordered by architectural size within the same model family (e.g., WRN-28-10 in purple and WRN-70-16 in green), student robustness varies significantly. 
This suggests that capacity gap alone cannot fully explain the observed discrepancies. 

To better understand this limitation, we analyze how the teacher's output confidence on student-crafted attacks influences the student's adversarial variance and overfitting through distillation. 
We find that highly confident (i.e., low-entropy) outputs from robust teachers exacerbate student variance under attack, resulting in unstable training. 
Our analysis further reveals that this instability arises from a lack of \emph{transferable adversarial samples (TAS)}--student-generated adversarial inputs that remain effective against the teacher--whose abundance strongly correlates with successful robustness transfer, as demonstrated in \Cref{fig:intro_c}.

\begin{figure}[t]
  \centering
  \begin{subfigure}[t]{0.95\linewidth}
    \centering
    \includegraphics[width=1\linewidth]{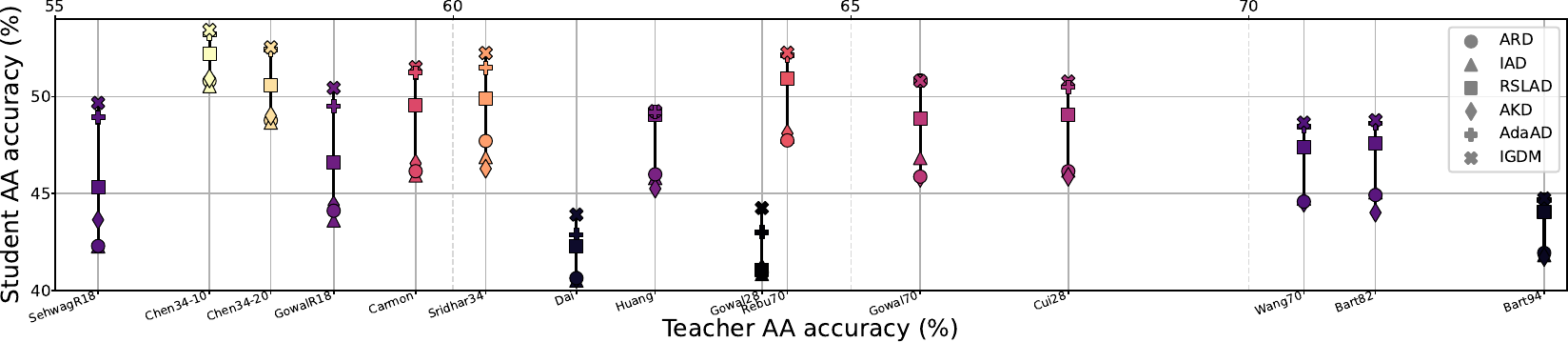}
    \caption{Student AA accuracy vs.\ Teacher AA accuracy.}
    \label{fig:intro_a}
  \end{subfigure}

  \begin{subfigure}[t]{0.95\linewidth}
    \centering
    \includegraphics[width=1\linewidth]{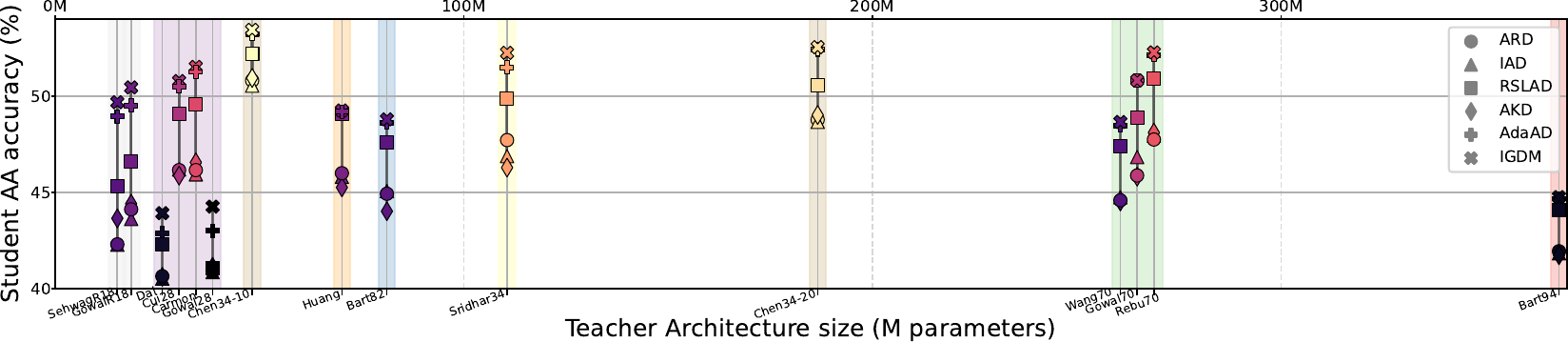}
    \caption{Student AA accuracy vs.\ Teacher architecture size. Colored regions represent the same architectures.}
    \label{fig:intro_b}
  \end{subfigure}

  \begin{subfigure}[t]{0.95\linewidth}
  \centering
    \includegraphics[width=1\linewidth]{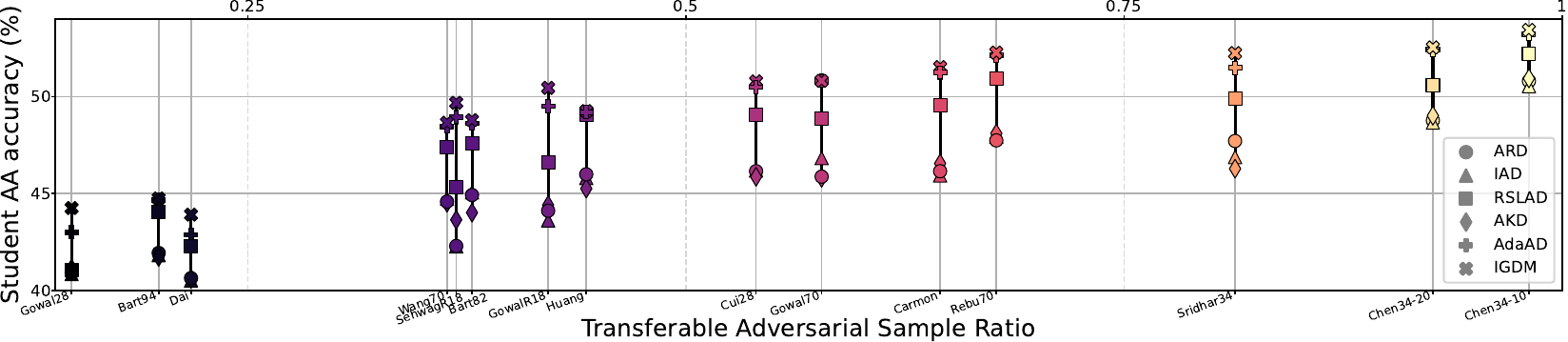}
    \caption{Student AA accuracy vs.\ Fraction of adversarial examples that transfer from student to teacher (TAS).}
    \label{fig:intro_c}
  \end{subfigure}

\caption{Adversarial distillation results on CIFAR-10 with a ResNet-18 student. Points are color-coded by the TAS ratio to illustrate the consistent correlation between adversarial transferability and resulting student robustness. Detailed teacher information and full experimental results are provided in \Cref{sec:supp_robust_teacher}.}
  \label{fig:intro}
\end{figure}

Motivated by this insight, we propose \textit{Sample-wise Adaptive Adversarial Distillation (SAAD)}, a novel approach that emphasizes samples with high adversarial transferability to improve robustness transfer.
SAAD assigns lower weights to non-transferable samples, effectively mitigating their high-variance effects and improving the student model's robustness.
We further introduce a clean distillation term weighted by inverse transferability, offering a tunable trade-off to recover clean accuracy without severely compromising robustness.
Extensive experiments demonstrate that our method consistently improves student robustness in cases where superior teacher models did not translate into enhanced robustness under existing methods.
Our contributions are as follows: 
\begin{itemize}
\item We identify adversarial transferability as a key factor for effective adversarial distillation, explaining why stronger teachers can fail to improve student robustness.
\item We propose \textit{Sample-wise Adaptive Adversarial Distillation (SAAD)}, which selectively emphasizes transferable samples to mitigate high-variance effects and improve robustness.
\item We show that our method consistently improves adversarial robustness and provides a tunable clean-robustness trade-off, outperforming prior distillation approaches.
\end{itemize}

\section{Related Works}
\label{sec:related_works}

\textbf{Adversarial Attacks and Transferability.}
Based on the adversary's level of access to the victim model, adversarial attacks are distinguished as either white-box or black-box. 
In the white-box paradigm, the adversary has full access to the model parameters and gradients, which enables gradient-based attacks such as FGSM~\citep{FGSM}, PGD~\citep{PGD}, and stronger optimization-based methods~\citep{CW_attack, AutoAttack, li2024oodrobustbench}.
In contrast, black-box attacks operate with limited knowledge of the target and are typically either query-based or transfer-based.
Query-based attacks directly probe the model, including score-based methods~\citep{uesato2018adversarial, andriushchenko2020square} and decision-based boundary attacks~\citep{chen2020rays, chen2020hopskipjumpattack, chen2021dair}.
Transfer-based attacks rely on the adversarial transferability phenomenon, where adversarial examples created for one model succeed in misleading another~\citep{szegedy2013intriguing, papernot2016transferability, papernot2017practical, tramer2017space}.
In this context, the adversary typically constructs adversarial examples on surrogate models and leverages their transferability as an attack mechanism~\citep{liu2016delving, Mahmood_2021_ICCV}.
Diverging from this conventional paradigm, our work re-purposes adversarial transferability as a diagnostic tool. We leverage it not to attack models, but to evaluate the efficacy of different teacher models within the adversarial distillation framework.

\textbf{Adversarial Training.}
In response to adversarial attacks, adversarial training (AT) has emerged as one of the most effective defenses. In its standard form, known as PGD-AT~\citep{PGD}, the model parameters \(\boldsymbol{\theta}\) are optimized via a min-max formulation:
\begin{equation}
    \arg\min_{\boldsymbol{\theta}} \ \mathbb{E}_{(\mathbf{x}, y) \sim \mathcal{D}} \Big[ \mathrm{CE}(\mathbf{y}, f_{\boldsymbol{\theta}}(\mathbf{x} + \boldsymbol{\delta})) \Big], \quad \text{where} \quad \boldsymbol{\delta} = \arg\max_{\boldsymbol{\delta} \in \Delta} \mathrm{CE}(\mathbf{y},f_{\boldsymbol{\theta}}(\mathbf{x} + \boldsymbol{\delta}))
\end{equation}
Here, the inner maximization generates adversarial perturbations that maximize the cross-entropy loss, while the outer minimization trains the model to minimize this loss under the worst-case perturbation \(\boldsymbol{\delta}\).
To address trade-offs between robustness and accuracy, TRADES~\citep{TRADES} reformulates adversarial training by decoupling the loss into a clean classification term and a robustness regularization via KL divergence:
\begin{equation}
    \arg\min_{\boldsymbol{\theta}} \ \mathbb{E}_{(\mathbf{x}, y) \sim \mathcal{D}} \left[ \mathrm{CE}(\mathbf{y},f_{\boldsymbol{\theta}}(\mathbf{x})) + \lambda \cdot \max_{\boldsymbol{\delta} \in \Delta} \mathrm{KL}(f_{\boldsymbol{\theta}}(\mathbf{x})\|f_{\boldsymbol{\theta}}(\mathbf{x} + \boldsymbol{\delta}) ) \right]
\end{equation}
This formulation explicitly balances natural accuracy and robustness through the hyperparameter \(\lambda\).
MART~\citep{MART} integrates per-sample weighting based on prediction confidence. Its objective can be described as:
\begin{equation}
    \arg\min_{\boldsymbol{\theta}} \ \mathbb{E}_{(\mathbf{x}, y) \sim \mathcal{D}} \Big[ \mathrm{CE}(\mathbf{y},f_{\boldsymbol{\theta}}(\mathbf{x} + \boldsymbol{\delta})) + (1 - w_y) \cdot \mathrm{KL}(f_{\boldsymbol{\theta}}(\mathbf{x})\|f_{\boldsymbol{\theta}}(\mathbf{x} + \boldsymbol{\delta}) ) \Big]
\end{equation}
where inner maximization to compute $\boldsymbol{\delta}$ is equal to the PGD-AT and the weight \(w_y\) is computed from the confidence of the true class prediction. This adaptively emphasizes hard examples and misclassified inputs during training.
These variants have inspired a rich line of adversarial training research~\citep{qin2019adversarial, 2020_awp, bai2021improving, jin2022enhancing, tack2022consistency, jin2023randomized, wei2023cfa}.

\textbf{Adversarial Distillation.}
Adversarial distillation (AD) aims to transfer the robustness of a large, adversarially trained teacher model into a more compact student model. 
The dominant paradigm, and the focus of our work, is AT-based AD, which leverages teacher signals under a min-max adversarial training framework~\citep{ard, iad, rslad,akd, adaad, kuang2023improving, IGDM}.
Unlike standard knowledge distillation~\citep{hinton2015distilling}, which aligns clean predictions, this approach explicitly considers adversarially perturbed inputs during training to preserve robustness in the student.

Adversarial Robustness Distillation (ARD)~\citep{ard} initiates this line of work by incorporating adversarial examples into the distillation process, showing that robust teachers can effectively guide student models when both are trained under adversarial settings.
RSLAD~\citep{rslad} builds on this by integrating teacher outputs directly into the generation of adversarial examples, encouraging smoother teacher logits and more stable student learning, and further reports a \emph{robust saturation effect}: a student's robustness increases with teacher strength only up to a moderately larger teacher and then declines as teacher capacity outpaces the student.
Introspective Adversarial Distillation (IAD)~\citep{iad} proposes a confidence-based modulation of the teacher signal, weighting the distillation loss by the estimated reliability of the teacher under adversarial inputs.
AdaAD~\citep{adaad} introduces a more sophisticated approach where the teacher is actively involved in the inner maximization step, generating adversarial examples that are optimized with respect to both the student and the teacher.
Most recently, IGDM~\citep{IGDM} indirectly distills the gradient information of the teacher model to enhance the robustness further.
\Cref{tab:sub_exsiting_ad} summarizes the inner maximization and outer minimization objectives used by representative AD methods.

While the aforementioned methods distill from a single robust teacher, another line of research employs multiple teachers to address the trade-off between clean accuracy and robustness \citep{mtard, DARHT}.
AD has also been applied to broader robustness contexts such as class imbalance~\citep{yue2023revisiting, zhao2024improving, cho2025longtailed}, incremental learning~\citep{cho2025enhancing}, and self-distillation~\citep{jung2024peeraid}.
Another line of research transfers robustness using non-AT-based methods. These approaches often leverage gradient or feature matching on clean inputs~\citep{shafahi2019adversarially, chan2020thinks, awais2021adversarial, chen2021cartl, muhammad2021mixacm, shao2021and, vaishnavi2022transferring}. As these methods are designed to replace the expensive PGD inner-loop, they optimize for a different trade-off and inherently sacrifice robustness. They are therefore orthogonal to our work, which focuses on diagnosing and solving the robust saturation phenomenon within the AT-based paradigm.

\section{Robust Teacher Failures: Entropy, Variance, Transferability}
\label{sec:motivation}
In prior AD works, the teacher models are typically either large networks trained with methods such as TRADES~\citep{TRADES} or publicly available robust models widely adopted by early AD research~\citep{rslad, adaad, IGDM}.
Although a new generation of SOTA robust models are now readily available on RobustBench~\citep{croce2021robustbench}, many recent AD studies have not focused on incorporating these specific models.
One might naturally expect that leveraging stronger teachers would yield improved student robustness.
However, our experiments across a diverse set of teachers in \Cref{fig:intro} reveal that existing AD methods are highly susceptible to teacher choice, with even the most robust teachers leading to poor student robustness.

A simple explanation often given for this phenomenon is the so-called robust saturation effect~\citep{rslad}, which attributes the diminishing gain of adversarial distillation to the capacity gap between the teacher and student models.
However, as shown in \Cref{fig:intro_b}, we find no consistent trend even when distillation outcomes are sorted by teacher architecture, indicating that the capacity gap alone cannot fully explain the failure modes.
Accordingly, we introduce a new framework by dividing robust teachers into two categories: \emph{Effective Robust Teachers} (ERTs) and \emph{Ineffective Robust Teachers} (IRTs), defined by whether students distilled from them via recent AD methods, on average, outperform or underperform  AT baselines (TRADES) in robust accuracy.
To systematically compare these groups, we select representative teachers as summarized in \Cref{tab:main_robustbench_model}, with additional details provided in \Cref{sec:supp_robust_teacher}.
For interpretability in subsequent analyses, we adopt RSLAD~\citep{rslad} as the baseline AD method and fix the student architecture to ResNet-18 trained on CIFAR-10.

\begin{table}[t]
 \centering
\caption{
Comparison of distillation outcomes using different teacher models categorized as Effective Robust Teachers (ERTs) and Ineffective Robust Teachers (IRTs).  
AA denotes AutoAttack accuracy (\%) of the teacher (left) and the student (right); RO measures robust overfitting, computed as the gap between the student’s best and last PGD-20 accuracy on the test set. 
AVar denotes the adversarial variance, and TAS refers to the ratio of transferable samples in the training dataset.
}
 \begin{tabular}{lllccccc}
  \toprule
 \multicolumn{4}{c}{Teacher Info}&  \multicolumn{4}{c}{Distillation Results}\\
 \cmidrule(r){1-4}
\cmidrule(r){5-8}

       Group&\multicolumn{1}{c}{RobustBench name} & \multicolumn{1}{c}{Architecture}  & AA  &  AA&RO &AVar
 &TAS\\
\midrule
 \multicolumn{1}{l}{\multirow{2}{*}{ERT}}&\texttt{Rebuffi2021Fixing}& WRN-70-16& 64.20  & 50.94  &0.20&0.0267&0.677\\
 &\texttt{Chen2021LTD}& WRN-34-10& 56.94  & 52.21  &0.15  &0.0059&0.981\\
 \midrule
 \multicolumn{1}{l}{\multirow{2}{*}{IRT}}&\texttt{Bartoldson2024Adversarial}& WRN-94-16& 73.71  & 44.07  &5.44 &0.0834
 &0.199\\
 &\texttt{Gowal2021Improving}& WRN-28-10& 63.38  & 41.08  &7.01  &0.3058 &0.149\\
 \bottomrule
 \end{tabular}

 \label{tab:main_robustbench_model}
\end{table}

\subsection{Characterizing IRTs: Overconfidence and Overfitting}

We observe two key distinctions between IRTs and ERTs.
First, IRTs tend to produce lower-entropy outputs than ERTs, particularly on adversarial inputs generated by the student model.
\Cref{fig:ent-ert} and \Cref{fig:ent-irt} show the density histograms of teacher-logit entropies evaluated on student-generated PGD-20 adversarial inputs.
We find that IRTs yield highly confident predictions with significantly lower entropy, while ERTs maintain a broader entropy distribution, suggesting a more calibrated uncertainty.
Importantly, a high output entropy does not necessarily imply non-robustness of the teacher model.
Despite exhibiting higher entropy, ERTs can correctly classify adversarial examples crafted on student models.
This suggests that ERTs maintain a level of uncertainty around adversarial inputs without fully collapsing into overconfident predictions, whereas IRTs often yield overconfident outputs aligned closely with the true label, even under attack. 

Second, students distilled from IRTs exhibit pronounced robust overfitting, whereas those distilled from ERTs maintain stable generalization.
This effect is visualized in \Cref{fig:twoT-rob-acc}, where the PGD-20 robust accuracy on the training and test sets for IRTs diverges significantly after the learning rate decay—a characteristic pattern of robust overfitting driven by the disruption of the min–max balance caused by the decay~\citep{wang2023balance}.
To quantify this, we report robust overfitting (RO) as the gap between the student’s best and last PGD-20 accuracy on the test set in \Cref{tab:main_robustbench_model}; IRT-distilled students exhibit large RO values, while ERT students show minimal overfitting.
These results demonstrate a clear empirical link between overconfident teacher outputs and robust overfitting in the student.
While the mechanism behind this link remains unclear, the consistency of these patterns across multiple IRTs suggests a deeper connection.
In the next section, we formally investigate this connection by analyzing adversarial variance as a potential explanatory factor.

\begin{figure}[t]
  \centering
  \begin{minipage}[b]{0.32\textwidth}
    \includegraphics[width=\linewidth]{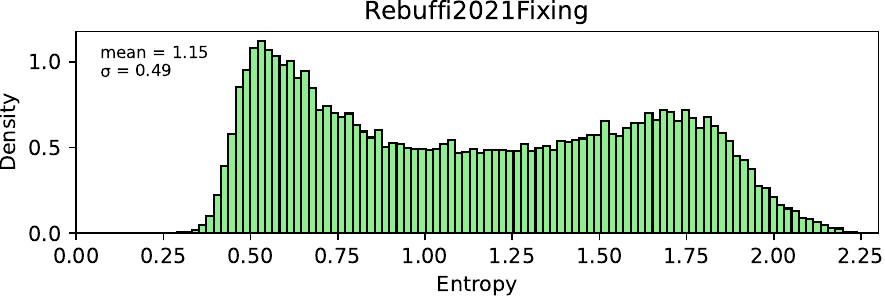}
    \includegraphics[width=\linewidth]{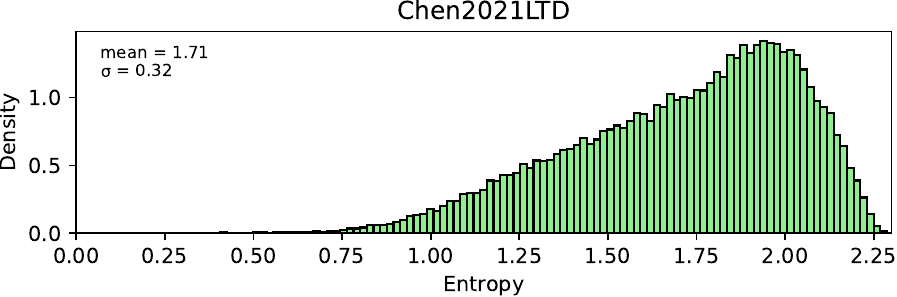}
    \subcaption{Entropy distributions on ERTs}
    \label{fig:ent-ert}
  \end{minipage}\hfill
  \begin{minipage}[b]{0.32\textwidth}
    \includegraphics[width=\linewidth]{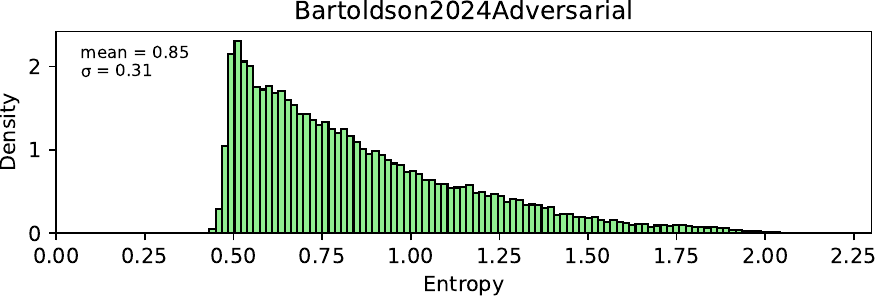}
    \includegraphics[width=\linewidth]{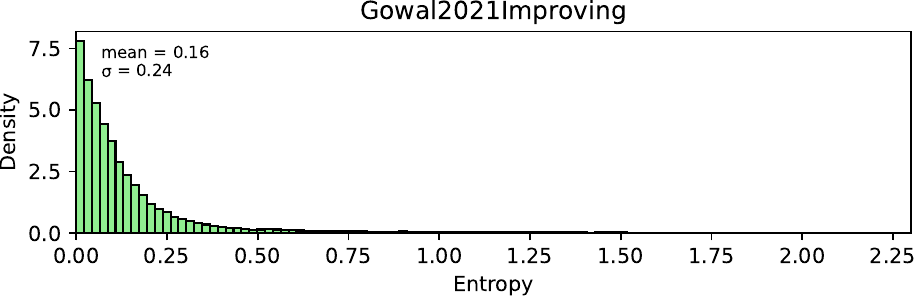}
    \subcaption{Entropy distributions on IRTs}
    \label{fig:ent-irt}
  \end{minipage}\hfill
  \begin{minipage}[b]{0.34\textwidth}
    \includegraphics[width=\linewidth]{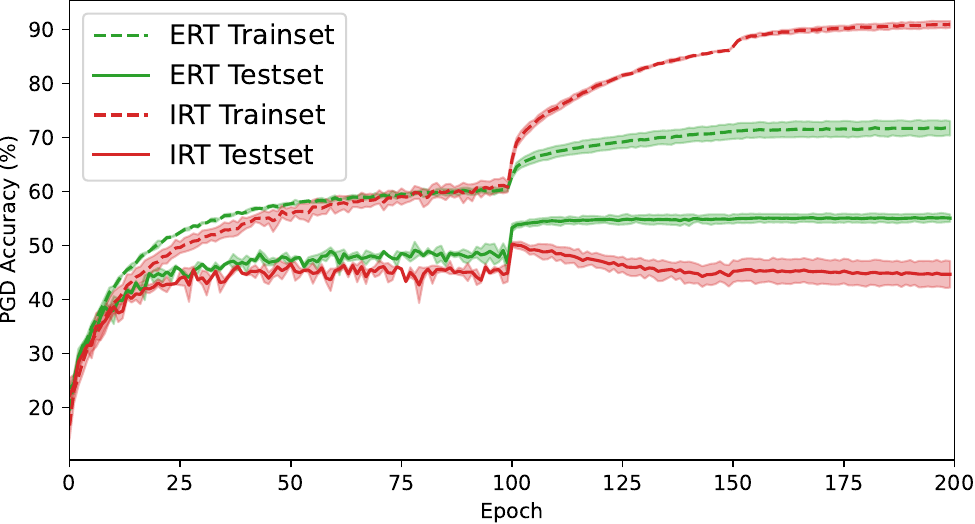}
    \subcaption{Robust accuracy dynamics}
    \label{fig:twoT-rob-acc}
  \end{minipage}
  \caption{
    {\bf(a)} Density histograms of teacher‐logit entropies on student‐generated PGD-20 adversarial training inputs for two ERTs.
    {\bf(b)} Same, but for IRTs.
    {\bf(c)} PGD-20 robust accuracy on training and test sets across epochs for students distilled from individual teachers within the ERT and IRT groups. Solid lines indicate group-wise averages, and shaded regions represent standard deviations across teachers in each group.
  }
\end{figure}

\subsection{Adversarial Variance Analysis on Adversarial Distillation}
To investigate how overconfident soft labels from robust teachers induce robust overfitting in students, we extend the classical bias–variance decomposition of expected risk to the adversarial distillation setting by introducing adversarial variance. 
This formulation unifies earlier decompositions from adversarial training~\citep{yu2021understanding} and knowledge distillation~\citep{zhou2021rethinking}, and helps account for teacher-dependent variation in adversarial distillation.
We note that $f_{\hat{\boldsymbol{\theta}}(\mathcal{D})}:\mathcal{X}\to \mathbb{R}^C$ represents the student model adversarially trained on dataset $\mathcal{D}$ under soft-label supervision from a fixed teacher.
While the teacher remains unchanged during training, its output distribution governs the soft labels used in the distillation process, thus indirectly shaping $\hat{\boldsymbol{\theta}}$ and the student’s final behavior.
For a test sample $\mathbf{x}$ with ground truth label $\mathbf{y}=t(\mathbf{x)}\in \mathbb{R}^C$, we consider the worst-case perturbation
\begin{equation}
\boldsymbol{\delta}(\mathbf{x}, \mathbf{y}, \mathcal{D}) \in \arg\max_{\boldsymbol{\delta} \in \Delta} L_{\text{max}}\bigl( f_{\hat{\boldsymbol{\theta}}(\mathcal{D})}(\mathbf{x} + \boldsymbol{\delta})\, , \mathbf{y} \bigr),
\end{equation}
and define
\begin{equation}
\hat{\mathbf{y}} := f_{\hat{\boldsymbol{\theta}}(\mathcal{D})}(\mathbf{x} + \boldsymbol{\delta}(\mathbf{x}, \mathbf{y}, \mathcal{D})), \qquad
\bar{\mathbf{y}} := \frac{1}{Z} \exp\left( \mathbb{E}_\mathcal{D}[\log \hat{\mathbf{y}}] \right),
\end{equation}
where $\bar{\mathbf{y}}$ denotes the normalized geometric mean of student predictions $\hat{\mathbf{y}}$ over datasets $\mathcal{D}$, and $Z$ is the normalization constant to ensure $\bar{\mathbf{y}}$ lies in the probability simplex.
Then, the expected adversarial cross-entropy risk admits the following decomposition:
\begin{equation}
\mathrm{ARisk} = \mathbb{E}_{\mathbf{x},\mathcal{D}} \left[ \mathrm{CE}(\mathbf{y}, \hat{\mathbf{y}}) \right]
=
\underbrace{\mathbb{E}_{\mathbf{x}} \left[ - \mathbf{y} \log \mathbf{y} \right]}_{\text{Intrinsic Noise}}
+
\underbrace{\mathbb{E}_{\mathbf{x}} \left[ \mathbf{y} \log \frac{\mathbf{y}}{\bar{\mathbf{y}}} \right]}_{\text{Adversarial Bias}}
+
\underbrace{\mathbb{E}_{\mathbf{x},\mathcal{D}} \left[ \mathrm{KL}(\bar{\mathbf{y}} \,\|\, \hat{\mathbf{y}}) \right]}_{\text{Adversarial Variance}},
\label{eq:adv_risk_decomp}
\end{equation}
where $\mathrm{CE}(\mathbf{p}, \mathbf{q}) = -\sum_i p_i \log q_i$ is the cross-entropy loss. 
Detailed explanation and an algorithm for estimating the adversarial bias and variance are given in \Cref{sec:supp_bv_alg}.
This decomposition enables us to empirically analyze how the adversarial variance of student models in adversarial distillation varies depending on different teacher models.

\begin{figure}[t]
  \centering
  \begin{minipage}[b]{0.49\textwidth}
    \subcaptionbox{Entropy vs. Adversarial Variance\label{fig:bv1}}{%
      \includegraphics[width=\linewidth]{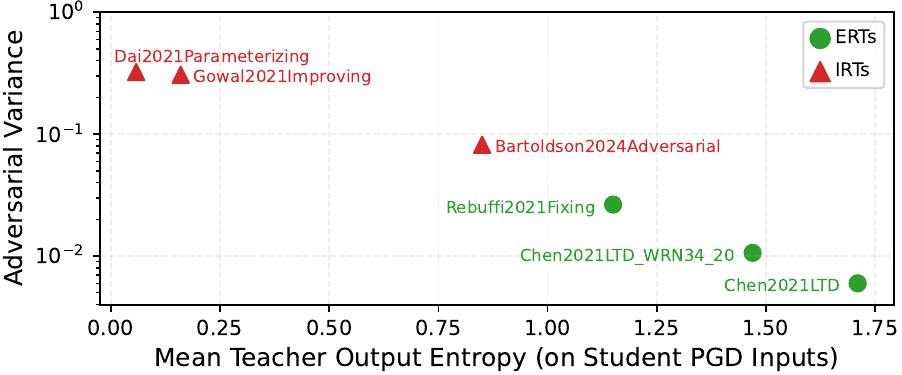}
    }
  \end{minipage}\hfill
  \begin{minipage}[b]{0.49\textwidth}
    \subcaptionbox{Adversarial Variance vs. Robust Overfitting\label{fig:bv2}}{%
      \includegraphics[width=\linewidth]{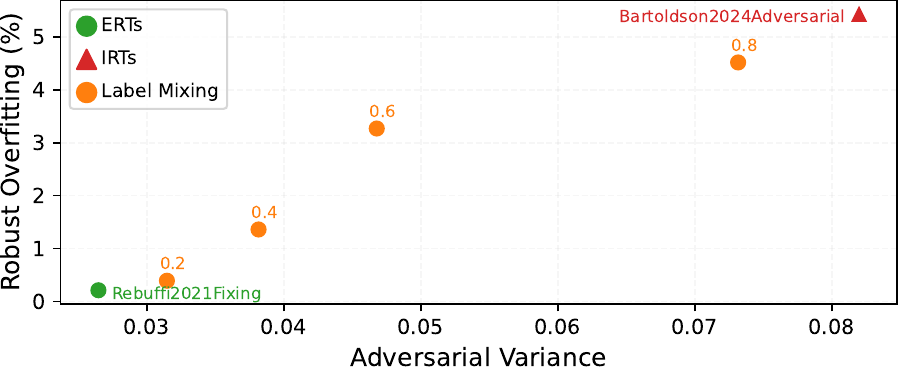}
    }
  \end{minipage}
  \caption{
  (a) Teachers with lower entropy on student-generated PGD inputs induce higher adversarial variance in the student. 
(b) Higher adversarial variance is associated with increased robust overfitting. 
Orange points show experiments where true labels are mixed into \texttt{Rebuffi2021Fixing} outputs. The numeric labels indicate the proportion of true label supervision.
  }
\end{figure}

\textbf{Overconfident Soft Labels Induce High Adversarial Variance.} 
We observe that adversarial variance increases when the teacher produces low-entropy predictions on student-generated PGD inputs. 
As shown in \Cref{fig:bv1}, robust teachers such as \texttt{Gowal2021Improving} and \texttt{Bartoldson2024Adversarial}—despite their high standalone robustness—exhibit low entropy and correspondingly high student variance, whereas higher-entropy teachers such as \texttt{Rebuffi2021Fixing} and \texttt{Chen2021LTD} yield more stable behavior.
This suggests that overconfident teacher outputs undermine the regularizing effect of soft labels, amplifying variance during adversarial training.
To probe this effect more directly, we conduct an interpolation experiment using \texttt{Rebuffi2021Fixing}, gradually injecting the ground-truth label into the teacher's logits.
As illustrated in \Cref{fig:bv2}, adversarial variance increases monotonically with the interpolation coefficient, indicating that growing confidence in soft labels directly induces instability in the student’s response.

\textbf{High Adversarial Variance Causes Robust Overfitting.}  
Analogous to classical statistical learning theory, we find that adversarial variance, measured over perturbed inputs, serves as a strong indicator of robust overfitting. 
As shown in \Cref{fig:bv2}, we observe a clear correlation between the magnitude of adversarial variance and the degree of overfitting to adversarial training data.  
While AD is generally expected to reduce variance and thereby mitigate overfitting, we find that this effect depends critically on the teacher's output distribution.  
In earlier AD studies, robust overfitting received limited attention, likely because ERTs inherently produce soft labels with sufficient uncertainty, resulting in low adversarial variance.  However, as more powerful yet sharper teachers are adopted, understanding and controlling adversarial variance becomes essential for ensuring stability in robust distillation.

Overconfident soft labels from IRTs induce high adversarial variance in the student model, leading to robust overfitting.
While this explains the mechanism of failure, it raises a deeper question: why do IRTs fail to provide meaningful supervision on student-generated adversarial examples? In the following section, we show that this limitation arises from a lack of transferability between the adversarial behaviors of the student and teacher models.

\begin{figure}[t]
  \centering
  \begin{minipage}[b]{0.34\textwidth}
    \includegraphics[width=\linewidth]{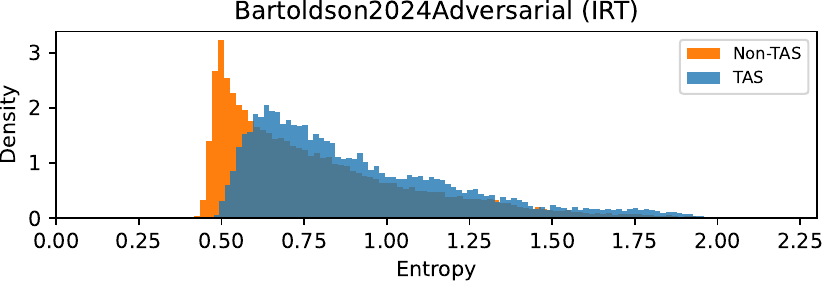}\\[-1ex]
    \includegraphics[width=\linewidth]{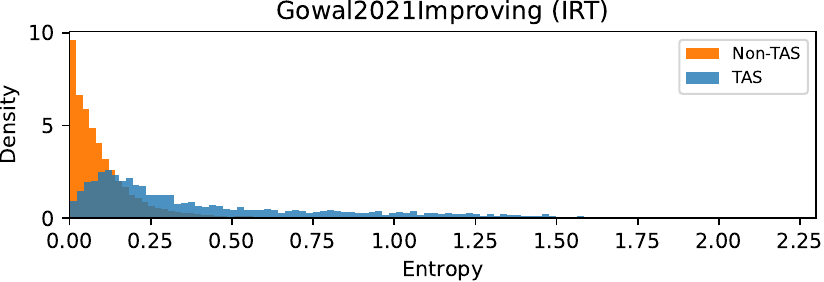}
    \subcaption{TAS and Non-TAS group entropy on IRTs}
    \label{fig:ent_tas}
  \end{minipage}\hfill
  \begin{minipage}[b]{0.28\textwidth}
    \includegraphics[width=\linewidth]{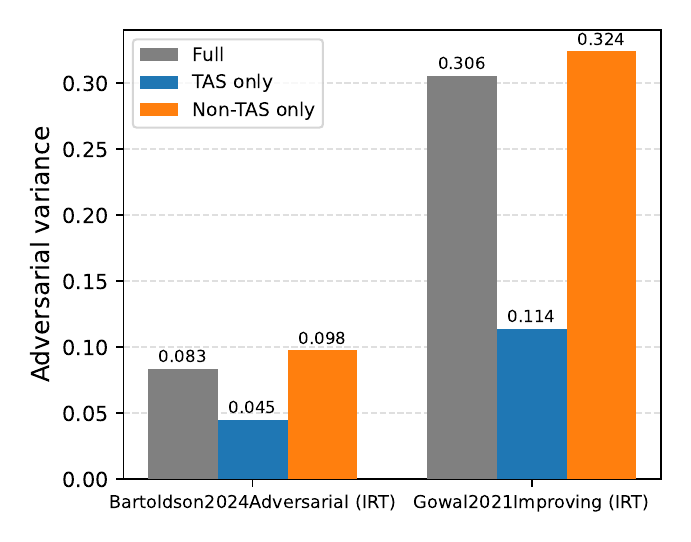}
    \subcaption{Variance on TAS and Non-TAS Subsets on IRTs}
    \label{fig:var_tas}
  \end{minipage}\hfill
  \begin{minipage}[b]{0.34\textwidth}
    \includegraphics[width=\linewidth]{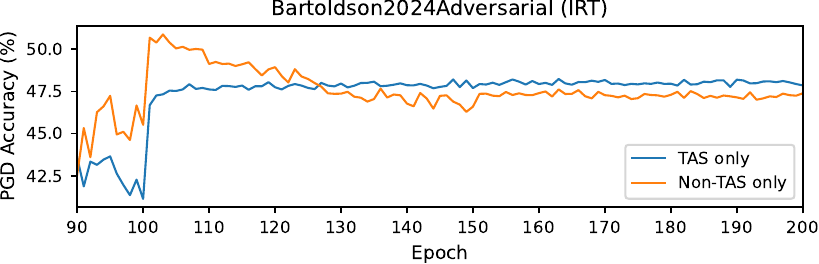}\\[-1ex]
    \includegraphics[width=\linewidth]{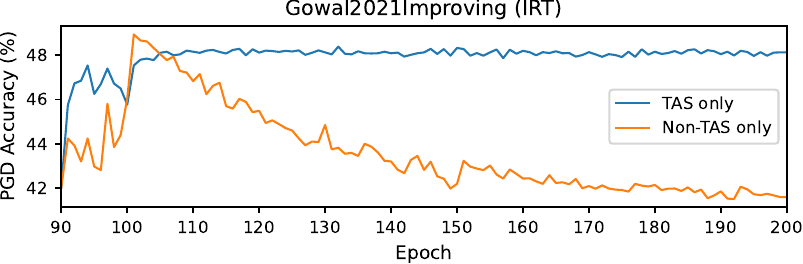}
    \subcaption{Test Robust Accuracy Trained on TAS and Non-TAS Subsets on IRTs}
    \label{fig:rob_tas}
  \end{minipage}\hfill
  \caption{
    \textbf{(a)} Density histograms of teacher-logit entropies on student-generated PGD-20 adversarial training input, separated into TAS and Non-TAS groups.  
    \textbf{(b)} Bar chart of adversarial variance when training only on the TAS subset versus the Non-TAS subset (for each teacher).  
    \textbf{(c)} PGD-20 robust accuracy on train (dashed) and test (solid) over epochs.
  }
\end{figure}

\subsection{Sample-Level Transferability in Adversarial Distillation}

We identify the lack of \emph{transferable adversarial samples} (TAS) as the primary cause of failure in AD under IRT supervision.
Specifically, when a student-crafted perturbation fails to induce a comparable adversarial shift in the teacher’s prediction, the teacher's supervision signal becomes misaligned, thereby degrading the efficacy of robustness transfer.
To formalize this notion, we conduct a sample-level analysis of behavioral alignment between student and teacher models under adversarial perturbations. We define a transferable adversarial sample as an input $\mathbf{x}$ for which the adversarial perturbation $\boldsymbol{\delta}_S$, crafted by the student, induces a response from the teacher that aligns more closely with its own adversarial response than with its original (clean) prediction. Formally, this condition is satisfied if:
\begin{equation}\label{eq:tas}
\mathrm{KL}\big(f_T(\mathbf{x} + \boldsymbol{\delta}_S) \,\|\, f_T(\mathbf{x})\big) \geq \mathrm{KL}\big(f_T(\mathbf{x} + \boldsymbol{\delta}_S) \,\|\, f_T(\mathbf{x} + \boldsymbol{\delta}_T)\big),
\end{equation}
indicating that the student's adversarial perturbation is aligned well with the teacher's $\boldsymbol{\delta}_T$.

In \Cref{fig:ent_tas}, we compare the output entropy of IRT teachers on student-generated adversarial inputs, separating samples into TAS vs. non-TAS categories. We observe that the TAS group maintains higher entropy, while the non-TAS group is concentrated in the low-entropy regime, indicating overconfident predictions. Furthermore, \Cref{fig:var_tas} presents the adversarial variance observed when training is continued separately on each sample group following 90 epochs of warm-up on the full dataset. Non-TAS group results in significantly higher adversarial variance, suggesting that the supervision they provide is unstable due to misaligned adversarial behavior.
Further, \Cref{fig:rob_tas} shows that this instability correlates with degraded generalization: models trained on the non-TAS group exhibit pronounced robust overfitting, whereas training on the TAS group preserves robust generalization. Taken together, these findings indicate that non-TAS, characterized by low entropy and high adversarial variance, induce unstable and misaligned supervision, ultimately leading to robust overfitting. Consequently, a high proportion of non-transferable examples impairs the efficacy of adversarial distillation, indicating the importance of TAS for successful robustness transfer.

As shown in \Cref{tab:main_robustbench_model}, the proportion of TAS is substantially lower for IRTs compared to ERTs, further reinforcing the connection between transferability and successful robustness transfer. Moreover,  \Cref{fig:intro_c} demonstrates a positive correlation between the TAS ratio and the student model's robustness under AutoAttack, underscoring the predictive value of this metric. These observations suggest that the scarcity of TAS is not merely a byproduct of poor distillation but a central cause of IRTs' inability to provide effective supervision. Thus, sample-level transferability emerges as a critical factor in explaining and potentially overcoming the limitations of adversarial distillation.

\section{Main Method: Sample-wise Adaptive Adversarial Distillation}

Our analysis reveals that the difference in distillation effectiveness between ERTs and IRTs arises from the entropy distribution of teacher logits and the resulting adversarial variance. ERTs produce higher-entropy outputs on student-generated adversarial inputs, which lead to lower adversarial variance and better generalization. In contrast, IRTs yield overconfident, low-entropy predictions that induce high adversarial variance and robust overfitting. This problem is further pronounced by the large portion of non-TAS samples under IRTs, where adversarial perturbations fail to meaningfully alter the teacher's outputs. These non-TAS samples dominate training, thereby exacerbating variance-driven overfitting.

\begin{wraptable}{r}{7.5cm}
    \centering
    \caption{Impact of transferable adversarial samples on adversarial distillation.}
    \label{tab:tas_nontas_result}
    \begin{tabular}{lccc}
    \toprule
    Setting&  \# of Data &Clean& AA\\
    \midrule

    Full Data&        50000&84.28& 44.42\\

    Excluding TAS&  45161 & 83.93& 43.05\\

    Only on TAS& 4839  &80.70 &44.00 \\

    \bottomrule
    \end{tabular}
\end{wraptable}

While existing AD methods can be effective when the teacher provides a sufficient number of transferable samples, they apply the distillation objective uniformly across all data points, failing to distinguish between transferable and non-transferable samples. 
A simple alternative is to train only on transferable samples. 
However, as shown in \Cref{tab:tas_nontas_result}, this approach yields inferior overall performance due to the reduced sample count, despite improved robustness over training only on non-transferable samples. 
These findings suggest that entirely discarding non-transferable samples is suboptimal, especially as adversarial training demands intensive data to achieve robustness~\citep{schmidt2018adversarially}.

Motivated by these insights, we propose Sample-wise Adaptive Adversarial Distillation (SAAD), which assigns higher weights to transferable adversarial samples during distillation. 
The weighting mechanism is derived from the transferable adversarial sample criterion defined in \eqref{eq:tas}. 
A key challenge, however, is that computing the teacher-side perturbation \(\boldsymbol{\delta}_T\), which is not required in standard adversarial distillation, incurs additional computational overhead, particularly for large teacher models. 
To address this, we note that from the student's perspective, the teacher outputs \(f_T(\mathbf{x})\) and \(f_T(\mathbf{x}+\boldsymbol{\delta}_T)\) remain fixed, while only the student-induced perturbation \(\boldsymbol{\delta}_S\) varies. 
We further leverage the empirical observation that the teacher's output distribution on its own adversarial input, \(f_T(\mathbf{x}+\boldsymbol{\delta}_T)\), typically exhibits higher entropy than on the clean input \(f_T(\mathbf{x})\). According to  \eqref{eq:tas}, transferable samples are those for which the student's perturbation \(\boldsymbol{\delta}_S\) sufficiently approximates the teacher's own adversarial behavior, thereby inducing a comparable increase in entropy in \(f_T(\mathbf{x}+\boldsymbol{\delta}_S)\).  

Based on this insight, SAAD assigns sample-wise weights proportional to the entropy of  \(f_T(\mathbf{x}+\boldsymbol{\delta}_S)\), effectively prioritizing transferable adversarial examples without incurring additional computational cost. 
A conceptual motivation for using the entropy of $f_T (x + \delta_S)$ as an empirical proxy for the TAS criterion is provided in \Cref{sec:supp_tas_to_loss}.
The resulting loss function is defined as:
\begin{equation}
L_{\mathrm{SAAD}} = \frac{1}{N} \sum_{i=1}^N w_i \cdot L_{\mathrm{AD}}(f_S, f_T, \mathbf{x}_i, \boldsymbol{\delta}_{S,i}) , \ \ w_i := H\left(f_T(\mathbf{x}_i + \boldsymbol{\delta}_{S,i})\right),
\label{eq:saad}
\end{equation}
where \(L_{\mathrm{AD}}\) denotes an existing AD method. In our implementation, we adopt IGDM~\citep{IGDM} as the base method; additional details are provided in \Cref{sec:supp_revisit_ad}.

\begin{table}[t]
 \centering
\caption{Performance (\%) of the teacher models.}
 \begin{tabular}{lllcc}
 \toprule
     \multicolumn{1}{c}{Dataset} & \multicolumn{1}{c}{RobustBench name} & \multicolumn{1}{c}{Architecture}  & Clean & AA\\
\midrule

\multicolumn{1}{l}{\multirow{2}{*}{CIFAR-10}} & 	\texttt{Bartoldson2024Adversarial}& WRN-94-16& 93.68& 73.71\\
&  \texttt{Gowal2021Improving} & WRN-28-10& 87.50& 63.38\\
\midrule
\multicolumn{1}{l}{\multirow{1}{*}{CIFAR-100}} & \texttt{Wang2023Better}& WRN-70-16& 75.22& 42.66\\
\midrule
 Tiny-ImageNet & 	\texttt{Wang2023Better}& WRN-28-10 & 65.19& 31.30\\
 \bottomrule
 \end{tabular}
 \label{tab:main_teacher}
\end{table}

By weighting adversarial distillation according to the entropy of the teacher’s perturbed outputs, non-transferable samples receive negligible weight and are effectively suppressed. 
Although such samples exhibit low-entropy teacher logits, indicating limited utility for robustness, they still contain confident supervision aligned with the true label.
To preserve this clean signal, we introduce a complementary clean distillation loss by assigning inverse weights \(1 - \tilde{w}_i\), where \(\tilde{w}_i = w_i / \log C\) denotes the entropy normalized by the maximum entropy for \(C\) classes:
\begin{equation}
\label{eq:saad_c}
L_{\mathrm{SAAD\text{-}C}} 
= L_{\mathrm{SAAD}} + \frac{1}{N} \sum_{i=1}^N
 \beta \cdot (1 - \tilde{w}_i) \cdot \mathrm{KL}\left(f_T(\mathbf{x}_i) \,\|\, f_S(\mathbf{x}_i)\right)
,
\end{equation}
for the clean distillation weight \(\beta\).
The second term thus reintroduces non‑transferable samples into the clean distillation process, allowing the student to learn clean knowledge from the teacher.
Such a clean distillation term has appeared in prior AD methods~\citep{rslad, adaad, IGDM}, but those works set their coefficient to zero in practice, as robustness losses outweighed the clean accuracy gains. 
In contrast, by restricting clean distillation to non‑transferable samples, we achieve substantial improvements in clean accuracy with only marginal robustness degradation.

\section{Experimental Results}

\subsection{Experiment Setup}
\label{sec:main_exp_setup}
\textbf{Adversarial Distillation Setting.}
We conduct experiments on CIFAR-10, CIFAR-100~\citep{krizhevsky2009learning}, and Tiny-ImageNet~\citep{le2015tiny}, using standard data augmentations (random crop and horizontal flip).  
We compare baseline adversarial training methods—PGD-AT~\citep{PGD} and TRADES~\citep{TRADES}—with six adversarial distillation approaches: ARD~\citep{ard}, IAD~\citep{iad}, RSLAD~\citep{rslad}, AKD~\citep{akd}, AdaAD~\citep{adaad}, and IGDM~\citep{IGDM}.  
Further details are provided in \Cref{sec:supp_training_details}.

\textbf{Teacher and Student Models.}
We employ robust teacher models summarized in \Cref{tab:main_teacher}, including state-of-the-art entries from RobustBench~\citep{croce2021robustbench}\footnote{Accessed Feb 02, 2026: \url{https://robustbench.github.io/}} for CIFAR-10 and CIFAR-100.  
To broaden our evaluation, we also include a variant with a different architecture for CIFAR-10.  
All selected teachers fall under the IRT category defined in \Cref{sec:motivation}, meaning that despite strong standalone robustness, they fail to effectively transfer robustness through existing adversarial distillation.
As student architectures, we use ResNet‑18~\citep{he2016deep} and MobileNetV2~\citep{sandler2018mobilenetv2} for CIFAR datasets and PreActResNet‑18~\citep{he2016identity} for Tiny‑ImageNet.

\textbf{Evaluation Setting.}
We evaluate each model using five metrics: Clean, FGSM, PGD, C\&W, and AutoAttack (AA) accuracy.
Clean accuracy is measured on the original test set without perturbation. 
FGSM and PGD accuracies are obtained using adversarial examples generated by the fast gradient sign method~\citep{FGSM} and a 20-step projected gradient descent attack~\citep{PGD}, respectively. 
C\&W accuracy is measured under the optimization-based attack proposed in~\citep{CW_attack}, while AA reports worst-case accuracy under the AutoAttack ensemble~\citep{AutoAttack}. 
All adversarial attacks are conducted under an \(l_{\infty}\)-norm constraint of \(8/255\).

\begin{table*}[t]
 \centering
 \caption{Adversarial distillation results using two teacher and two student models on CIFAR-10. Clean, FGSM, PGD, C\&W, and AA columns report accuracy (\%) under each evaluation setting. Results are averaged over three random seeds.
 }
 \begin{tabular}{llcccccccccc}
 \toprule
 \multicolumn{1}{c}{\multirow{2}{*}{\rotatebox[origin=c]{90}{Model}}} & \multicolumn{1}{c}{\multirow{2}{*}{Method}} & \multicolumn{5}{c}{\texttt{Bartoldson2024Adversarial}}& \multicolumn{5}{c}{\texttt{Gowal2021Improving}}\\
 \cmidrule(r){3-7}
\cmidrule(r){8-12}
&  & Clean  &FGSM& PGD  &C\&W& AA  & Clean  & FGSM& PGD  & C\&W&AA  \\
 \midrule
\multicolumn{1}{l}{\multirow{10}{*}{\rotatebox[origin=c]{90}{ResNet-18}}}  &PGD-AT& 84.27& 52.10& 42.34& 42.29&40.85 & 84.27& 52.10& 42.34& 42.29&40.85 
\\
  &TRADES& 82.70& 57.14& 48.81& 48.08&46.46 & 82.70& 57.14& 48.81& 48.08&46.46 
\\
  &ARD& 84.63&56.57& 44.48&43.44& 41.66 & 84.39& 52.47& 42.18& 42.22&40.79
\\
   &IAD& 84.43& 56.64& 44.82& 43.47&41.80 & 84.28& 52.25& 42.16& 42.19&40.70
\\
  &RSLAD & 84.28&57.20& 47.17&46.07&44.42 & 83.83& 51.59& 42.03& 42.22&40.57
\\
   &AKD& 84.62& 56.29& 44.42& 43.43&41.79 & 84.32& 52.13& 42.38& 42.44&40.95
\\
  &AdaAD &\underline{85.07}&57.54& 47.16&46.06& 44.55 & 85.04& 53.85& 44.65& 44.90&43.27\\
  &IGDM& 84.75&58.38& 47.56&46.43&  44.94& \underline{85.67}& 58.14& 48.58& 46.98&44.76\\
  &\textbf{SAAD-C}& \textbf{85.54}&\textbf{61.92}& \underline{53.18}&\underline{52.05}& \underline{50.14}& \textbf{86.39}& \textbf{60.55}& \underline{51.91}& \underline{52.06}&\underline{49.72}\\
  &\textbf{SAAD}& 84.27&\underline{61.44}& \textbf{53.39}&\textbf{52.39}& \textbf{50.34}& 83.69& \underline{59.74}& \textbf{52.89}& \textbf{52.36}&\textbf{50.35}\\
  \midrule
\multicolumn{1}{l}{\multirow{10}{*}{\rotatebox[origin=c]{90}{MobileNetV2}}}  &PGD-AT& 83.52& 54.92& 44.90& 44.29&41.54
& 83.52& 54.92& 44.90& 44.29&41.54
\\
  &TRADES& 
81.79& 56.50& 49.90& 47.54&46.50
& 
81.79& 56.50& 49.90& 47.54&46.50
\\
  &ARD& 83.66&55.09& 44.71&43.49& 41.24
& 83.62& 54.62& 44.60& 44.18&41.47\\
   &IAD& 83.85& 55.69& 44.98& 43.62&41.35
& 
83.63& 54.81& 44.73& 44.19&41.51\\
  &RSLAD & 83.22&55.54& 46.09&44.80&42.56
& 83.41& 54.60& 45.01& 44.41&41.78\\
   &AKD& 83.60& 55.13& 44.31& 43.29&41.03& 83.54& 54.79& 44.83& 44.19&41.55\\
  &AdaAD &\underline{84.42}&56.38& 46.16&44.99& 43.01& \underline{84.37}& 54.40& 44.40& 44.59&41.95\\
  &IGDM& 84.07&57.31& 47.39&45.43&  43.57& 84.13& \underline{57.93}& 48.70& 47.43&44.83\\
 & \textbf{SAAD-C}& \textbf{85.16}& \textbf{60.53}& \underline{52.72}& \underline{51.26}& \underline{49.34}& \textbf{84.81}& \textbf{58.17}& \underline{51.09}& \textbf{50.45}&\underline{48.08}\\
  &\textbf{SAAD}& 82.04& \underline{59.48}& \textbf{53.69}&\textbf{51.68}& \textbf{49.88}& 80.60& 56.85& \textbf{51.78}& \underline{50.25}&\textbf{48.29}\\

 \bottomrule
 \end{tabular}
 \label{tab:main_cifar10}
\end{table*}

\subsection{Adversarial Distillation Results}

\Cref{tab:main_cifar10} and \Cref{tab:main_cifar100_tiny} summarize adversarial robustness across datasets and methods.  
SAAD consistently achieves the best AutoAttack accuracy across all settings, outperforming conventional adversarial training as well as prior distillation techniques.  
Unlike existing AD methods whose performance varies significantly depending on the teacher model, SAAD maintains strong robustness even under IRT teachers.  
This suggests that weighting transferable samples during training is crucial for stable robustness transfer in adversarial distillation.
Moreover, SAAD-C, which selectively incorporates clean supervision on non-transferable samples, provides a tunable and highly favorable trade-off to recover clean accuracy without severely compromising robustness.

\begin{wraptable}{r}{6cm}
\caption{Mitigation of robust overfitting and reduction of adversarial variance via sample-wise weighting with an IRT teacher.}
 \centering
 \begin{tabular}{lccc}
\toprule
     Method& AVar &RO &TAS\\ 
\midrule
Baseline& 0.0834& 5.44 & 0.199\\
SAAD& 0.0385 & 0.93 & 0.326\\
 \bottomrule
 \end{tabular}
 \label{tab:exp_var_ro}
\end{wraptable}

We attribute the superior performance under IRTs to the mitigation of robust overfitting.
As discussed in \Cref{sec:motivation}, robust overfitting arises from high adversarial variance (AVar), particularly when the teacher produces overconfident soft labels.
To address this, SAAD introduces a sample-wise weighting scheme that effectively suppresses variance.
As shown in \Cref{tab:exp_var_ro}, our method substantially reduces adversarial variance  and dramatically mitigates robust overfitting (RO), while increasing the ratio of transferable adversarial samples (TAS).
These findings confirm that lowering adversarial variance via our weighting scheme is the key factor driving the improved generalization reported in the main tables.

\begin{table*}[t]
 \centering
 \caption{
 Adversarial distillation results using ResNet-18 and PreActResNet-18 as student models for CIFAR-100 and Tiny-ImageNet, respectively, with the \texttt{Wang2023Better} teacher (WRN-70-16 for CIFAR-100 and WRN-28-10 for Tiny-ImageNet). Clean, FGSM, PGD, C\&W, and AA columns report accuracy (\%) under each evaluation setting. Results are averaged over three random seeds.
 }
 \begin{tabular}{lcccccccccc}
 \toprule
\multicolumn{1}{c}{\multirow{2}{*}{Method}} &\multicolumn{5}{c}{CIFAR-100}& \multicolumn{5}{c}{Tiny-ImageNet}\\
 \cmidrule(r){2-6}
\cmidrule(r){7-11}
   & Clean  &FGSM& PGD  &C\&W& AA  & Clean  & FGSM& PGD  & C\&W&AA  \\
 \midrule
 PGD-AT& 56.17& 24.74& 19.65& 19.79&18.66
& 45.71& 15.75& 11.67& 11.82&10.91\\
 TRADES& 53.37& 28.72& 25.12& 23.11&22.32& 42.06& 19.58& 17.15& 14.03&13.33\\
 ARD& 58.13&29.71& 24.97&22.22& 20.84
& 55.69& 30.13& 26.62& 22.09&19.90\\
  IAD& 57.59& 29.85& 25.21& 22.41&21.00
& 53.75& 29.85& 26.95& 22.40&20.56\\
 RSLAD & 56.68&30.87& 27.27&23.91&22.66
& 53.18& 30.14& 27.69& 22.86&21.42\\
  AKD& 58.22& 29.14& 24.35& 21.80&20.61
& 54.48& 27.62& 23.59& 19.56&17.73\\
 AdaAD &58.57&31.72& 28.00&24.40& 23.15
& \underline{57.26}& 31.81& 28.80& 23.64&22.11\\
 IGDM& 56.36&32.95& 29.68&25.91&  24.81& 57.15& 31.98& 29.02& 23.94&22.52\\
 \textbf{SAAD-C}& \textbf{59.57}& \textbf{36.05}& \underline{32.52}& \underline{28.72}& \underline{27.21}& \textbf{57.33}& \underline{33.06}& \underline{29.62}& \underline{24.16}&\underline{22.69}\\
 \textbf{SAAD}& \underline{59.11}&\underline{36.01}& \textbf{32.71}&\textbf{29.36}& \textbf{27.58}& 57.16& \textbf{33.26}& \textbf{29.95}& \textbf{24.87}&\textbf{23.42}\\
 \bottomrule
 \end{tabular}
 \label{tab:main_cifar100_tiny}
\end{table*}

\subsection{Generalization and Ablation Studies}

\paragraph{Adversarial Distillation with ERT.}
\begin{table}[t]
\caption{Adversarial distillation results on ResNet-18 for CIFAR-10 using an ERT teacher (\texttt{Chen2021LTD\_WRN34\_20}).}
 \centering
 \begin{tabular}{lccccc}
\toprule
     Method&Clean &FGSM& PGD &C\&W& AA\\ 
\midrule
 ARD  &85.57&61.07& 52.13&50.72&48.78\\
 IAD  & 84.64&60.78& 53.10&50.73&48.67\\
RSLAD  & 84.12&60.37& 54.68&51.96&50.58\\
 AKD & 84.51&60.12&  52.17&50.71& 49.03\\
 AdaAD  & 85.10&61.89& 56.57&53.59&52.43\\
 IGDM& 85.31&62.90&  57.28&53.91& 52.55\\
SAAD& 85.78&62.77& 57.25&53.83&52.69\\
 \bottomrule
 \end{tabular}
 \label{tab:good_teacher}
\end{table}

\Cref{tab:good_teacher} demonstrates SAAD's performance in a high-transferability scenario using an ERT. In this setting, SAAD matches or slightly exceeds IGDM.
This finding highlights a key aspect of our method: even though SAAD's primary mechanism targets low-transferability, it maintains strong performance in this favorable scenario—a crucial feature given that a teacher's effectiveness is often unknown in practice.
Therefore, this result validates SAAD as a robust default: it incurs no performance degradation in favorable settings (with an ERT) while significantly improving robustness when transferability is weak (with an IRT).

\paragraph{OODRobustBench Evaluation.}
To evaluate the generalization of adversarial robustness beyond in-distribution test sets, we additionally conduct experiments on OODRobustBench~\citep{li2024oodrobustbench}, a benchmark specifically designed to assess robustness under distribution shifts. It includes two major types of shifts: dataset shifts and threat shifts. 
For OOD$_d$, we report both the clean accuracy on naturally shifted datasets (C-OOD$_d$) and the robust accuracy under MM5 adversarial attacks~\citep{gao2022fast} (R-OOD$_d$), which encompass corruptions such as noise and blur.
On the other hand, OOD$_t$ evaluates robustness against six unseen attack types, including both large-$\epsilon$ $l_p$-norm attacks and non-$l_p$ threat models.
As shown in \Cref{tab:ood_blackbox}, SAAD consistently outperforms existing AD methods across both dataset and threat shifts.

\paragraph{Black-box Attack Evaluation.}
We further evaluate the students against query-based black-box attacks to verify their robustness in scenarios where gradient information is unavailable.
We employ RayS~\citep{chen2020rays}, Square Attack~\citep{andriushchenko2020square}, and SPSA~\citep{uesato2018adversarial} using the ResNet-18 students distilled from the \texttt{Gowal2021Improving} teacher.
As presented in \Cref{tab:ood_blackbox}, SAAD exhibits substantially stronger robustness than other distillation methods in this setting. 
The fact that SAAD maintains high accuracy against these diverse query-based attacks confirms that our method effectively defends against threats regardless of the attacker's access to model gradients.

\begin{table*}[t]
\centering
\caption{OODRobustBench results and Black-box attack robustness on CIFAR-10. All methods are distilled from the \texttt{Gowal2021Improving} teacher to ResNet-18 student.}
\label{tab:ood_blackbox}
\begin{tabular}{l c ccc cccc}
\toprule
\multirow{2.5}{*}{Method} & \multirow{2.5}{*}{AA} & \multicolumn{3}{c}{OODRobustBench}& \multicolumn{3}{c}{Black-box Attacks} \\
\cmidrule(lr){3-5} \cmidrule(lr){6-8}
 & & C-OOD$_d$ & R-OOD$_d$ & OOD$_t$ & RayS(40k) & Square(5k) & SPSA(40k) \\
\midrule
ARD   & 40.79 & 73.92 & 25.87 & 18.25 & 46.42 & 49.57 & 51.26 \\
IAD   & 40.70 & 73.80 & 25.97 & 18.63 & 46.49 & 49.50 & 51.28 \\
RSLAD & 40.57 & \underline{74.97} & 27.25 & 20.77 & 48.21 & 50.89 & 52.93 \\
AKD   & 40.95 & 73.87 & 25.99 & 18.25 & 46.82 & 49.68 & 50.95 \\
AdaAD & 43.27 & 74.36 & 27.43 & 20.62 & 48.68 & 52.08 & 53.44 \\
IGDM  & 44.76 & 71.19 & 30.40 & 24.83 & 49.67 & 52.01 & 53.83 \\
\textbf{SAAD-C} & \underline{49.72} & \textbf{76.34} & \underline{34.52} & \underline{26.06} & \textbf{55.57} & \textbf{58.96} & \textbf{60.19} \\
\textbf{SAAD}   & \textbf{50.35} & 74.47 & \textbf{35.36} & \textbf{27.19} & \underline{55.47} & \underline{58.61} & \underline{60.00} \\
\bottomrule
\end{tabular}
\end{table*}

\section{Conclusion}

In this paper, we challenged the common assumption that a more robust teacher necessarily yields a more robust student in adversarial distillation.  
We showed that the key bottleneck is not the capacity gap but the transferability of adversarial perturbations between teacher and student.
To address this, we introduced Sample‑wise Adaptive Adversarial Distillation (SAAD), which dynamically up‑weights those examples whose adversarial attacks on the student remain effective for the teacher, and proposed a complementary clean distillation variant (SAAD-C) that provides a tunable, favorable trade-off to recover clean accuracy from non-transferable samples.
We experimentally demonstrated that our approach consistently outperforms existing AD methods and even standard adversarial training when transferability is limited.  

\section*{Acknowledgement}
This work was supported by the National Research Foundation of Korea (NRF) grant funded by the Korea government (MSIT) (No. RS-2024-00408003 and RS-2025-00516153) and the Institute for Information \& communications Technology Planning \& Evaluation (IITP) grant funded by the Korea government (MSIT) (No. RS-2024-00444862 and RS-2026-25522672).

\bibliography{main}
\bibliographystyle{tmlr}
\newpage
\appendix

\section{Implementation Details}

\subsection{Robust Teacher and Analysis Details}
\label{sec:supp_robust_teacher}
\Cref{tab:supp_robustbench} summarizes the teacher models used in our study, all selected from RobustBench~\citep{croce2021robustbench}. 
\Cref{tab:sub_robustbench_distill} presents the results of adversarial distillation from each teacher into a ResNet-18 student, using six recent distillation methods. For each teacher, we report the student's AutoAttack accuracy under each method, the average accuracy across methods (Mean AA), and the transferable adversarial sample ratio (TAS).
As introduced in \Cref{sec:motivation}, we divide teachers into two groups for clear interpretation: \emph{Effective Robust Teachers} (ERTs) and \emph{Ineffective Robust Teachers} (IRTs). 
This categorization is based on the Mean AA value: a teacher is labeled as an ERT if the Mean AA exceeds the TRADES baseline (46.46\%) by more than 3 percentage points, and as an IRT if it falls below that baseline by more than 3 points.
The $\pm$3\% band and the ERT/IRT split are used only in \Cref{sec:motivation} for analysis, to highlight clearly separated cases.
All analyses presented in the main paper—including TAS, adversarial variance (AVar), robust overfitting (RO), and baseline comparisons in \Cref{tab:exp_var_ro}—are conducted using RSLAD as the baseline AD method.

\begin{table}[h]
 \centering
 \caption{Summary of teacher models used in our study, selected from RobustBench. 
`Abbr.' denotes the shorthand identifier used in \Cref{fig:intro}. \textbf{Bolded entries} correspond to the teachers analyzed in \Cref{sec:motivation}; in that section, their RobustBench names are shown without architecture suffixes 
(e.g., \texttt{Bartoldson2024Adversarial\_WRN-94-16} is referred to as \texttt{Bartoldson2024Adversarial})}

 \begin{adjustbox}{width=\linewidth}
 \begin{tabular}{lllrcc}
 \toprule
 Abbr. & \multicolumn{1}{c}{RobustBench name} & \multicolumn{1}{c}{Architecture} & \multicolumn{1}{c}{Size(M)} & \multicolumn{1}{c}{Clean} & \multicolumn{1}{c}{AA} \\
 \midrule
 Bart94 & \textbf{Bartoldson2024Adversarial\_WRN-94-16}~\citep{bartoldson2024adversarial} & WRN-94-16 & 365.92 & 93.68 & 73.71 \\
 Bart82 & Bartoldson2024Adversarial\_WRN-82-8~\citep{bartoldson2024adversarial} & WRN-82-8 & 79.13 & 93.11 & 71.59 \\
 Wang70 & Wang2023Better\_WRN-70-16~\citep{wang2023better}& WRN-70-16 & 266.80 & 93.25 & 70.69 \\
 Cui28 & Cui2023Decoupled\_WRN-28-10~\citep{cui2023decoupled}& WRN-28-10 & 36.48 & 92.16 & 67.73 \\
 Gowal70 & Gowal2020Uncovering\_70\_16\_extra~\citep{gowal2020uncovering}& WRN-70-16 & 266.80 & 91.10 & 65.87 \\
 Rebu70 & \textbf{Rebuffi2021Fixing\_70\_16\_cutmix\_ddpm}~\citep{rebuffi2021fixing} & WRN-70-16 & 266.80 & 88.54 & 64.20 \\
 Gowal28 & \textbf{Gowal2021Improving\_28\_10\_ddpm\_100m}~\citep{gowal2021improving}& WRN-28-10 & 36.48 & 87.50 & 63.38 \\
 Huang & Huang2021Exploring\_ema~\citep{huang2021exploring}& WRN-34-R & 68.12 & 91.23 & 62.54 \\
 Dai & Dai2021Parameterizing~\citep{dai2022parameterizing}& WRN-28-10 & 36.48 & 87.02 & 61.55 \\
 Sridhar34 & Sridhar2021Robust\_34\_15~\citep{sridhar2022improving} & WRN-34-15 & 108.53 & 86.53 & 60.41 \\
 Carmon & Carmon2019Unlabeled~\citep{carmon2019unlabeled} & WRN-28-10 & 36.48 & 89.69 & 59.53 \\
 GowalR18 & Gowal2021Improving\_R18\_ddpm\_100m~\citep{gowal2021improving} & PreActRN-18 & 12.55 & 87.35 & 58.50 \\
 Chen34-20 & Chen2021LTD\_WRN34\_20~\citep{LTD} & WRN-34-20 & 184.53 & 86.03 & 57.71 \\
 Chen34-10 & \textbf{Chen2021LTD\_WRN34\_10}~\citep{LTD} & WRN-34-10 & 46.16 & 85.21 & 56.94 \\
 SehwagR18 & Sehwag2021Proxy\_R18~\citep{sehwag2021robust} & RN-18 & 11.17 & 84.59 & 55.54 \\
 \bottomrule
 \end{tabular}
 \label{tab:supp_robustbench}
 \end{adjustbox}
\end{table}

\begin{table}[h]
 \centering
\caption{AutoAttack accuracy (\%) of student models distilled from each RobustBench teacher using different methods. Each row corresponds to a teacher model shown, and columns represent distillation methods in \Cref{fig:intro}. Mean AA denotes the average performance across all methods for a given teacher. TAS indicates the transferable adversarial sample ratio with RSLAD method. \textbf{Bold Mean AA} values indicate ERTs, whose students outperform the TRADES baseline (46.46\%) by more than 3 percentage points. \underline{Underlined Mean AA} values denote IRTs, whose students fall short of the baseline by more than 3 points.}
\setlength{\tabcolsep}{10pt}
 \begin{tabular}{lcccccccc}
 \toprule
      \multicolumn{1}{c}{Abbr. 
} &   ARD& IAD& RSLAD& AKD&AdaAD &IGDM& Mean AA & TAS\\
\midrule
 Bart94 
& 41.94& 41.83& 44.07& 41.73& 44.68&44.75
& \underline{43.17} & 0.1991\\
 Bart82 
& 44.93& 45.07& 47.59& 44.02& 48.63&48.80&  46.51 & 0.3779\\
Wang70 
& 44.59 & 44.75 & 47.38 & 44.53 & 48.47 &48.66 
&  46.40 & 0.3656\\
Cui28 
& 46.16 & 46.20 & 49.07 & 45.87 & 50.50 &50.79 
&  48.10 & 0.5400\\
Gowal70 
& 45.88 & 46.84 & 48.89 & 45.82 & 50.82 &50.82 
&  48.18 & 0.5774\\
Rebu70 
& 47.75 & 47.95 & 50.94 & 48.11 & 52.14 &52.28 
&  \textbf{50.20} & 0.6770\\
Gowal28 
& 40.94 & 40.85 & 41.08 & 41.13 & 43.01 &44.26 
& \underline{42.05} & 0.1494\\
Huang 
& 46.00 & 45.81 & 49.09 & 45.26 & 49.20 &49.26 
&  47.44 & 0.4431\\
Dai 
& 40.64 & 40.52 & 42.28 & 40.61 & 42.87 &43.92 
&  \underline{41.81} & 0.2175\\
Sridhar34 
& 47.72 & 46.89 & 49.89 & 46.29 & 51.50 &52.26 
&  48.84 & 0.8133\\
Carmon 
& 46.16 & 45.94 & 49.56 & 46.56 & 51.26 &51.53 
&  48.50 & 0.6450\\
GowalR18 
& 44.12 & 43.62 & 46.61 & 44.41 & 49.51 &50.46 
&  46.72 & 0.4213\\
Chen34-20 
& 48.78 & 48.67 & 50.58 & 49.03 & 52.43 &52.55 
&  \textbf{50.34} & 0.9262\\
Chen34-10 
& 50.82 & 50.55 & 52.21 & 50.95 & 53.24 &53.45 
&  \textbf{51.87} & 0.9810\\
SehwagR18 & 42.30 & 42.28 & 45.33 & 43.65 & 48.95 &49.69 
& 45.37 & 0.3689\\
  \bottomrule
 \end{tabular}

 \label{tab:sub_robustbench_distill}

\end{table}

\subsection{Adversarial Variance Details}
\label{sec:supp_bv_alg}
In this section, we provide further technical details on the computation of adversarial variance and the associated decomposition of adversarial error.
\Cref{alg:avar-distill} outlines the procedure used throughout the main paper to estimate adversarial variance under AD. 
This algorithm quantifies how much the learned student model varies when trained on different subsets of the training data, using a fixed AD method. 
As shown in the algorithm, we split the full training dataset \(\mathcal{D}\) into \(N\) disjoint subsets and independently train student models on each subset using a fixed AD method. 
This allows us to observe how much the resulting models vary in their outputs under adversarial evaluation.
For each trained student, the variation is measured by evaluating each model’s prediction at a fixed test point 
$(\mathbf{x},\mathbf{y})$ under its corresponding adversarial input, and computing the KL divergence between these predictions and their geometric mean, reflecting how much model outputs fluctuate due to data-induced randomness.
Following~\citep{yu2021understanding}, we set the number of splits \(N=2\), corresponding to training each student on half of CIFAR-10 (25,000 examples). To obtain more stable estimates, we repeat this procedure \(K=2\) times with different random splits.

\begin{algorithm}[h]
\caption{Estimating Adversarial Variance under Adversarial Distillation}
\label{alg:avar-distill}
\begin{algorithmic}[1]
\Require Test point \((\mathbf{x},\mathbf{y})\), dataset \(\mathcal D= \{(\mathbf{x}_i, \mathbf{y}_i)\}_{i=1}^n\) , teacher \(f_T\),  
         number of splits \(N\), repetitions \(K\)
\For{\(k=1\) \textbf{to} \(K\)}
  \State Randomly split \(\mathcal D\) into \(\{\mathcal D_j^{(k)}\}_{j=1}^N\).
  \For{\(j=1\) \textbf{to} \(N\)}
    \State Adversarially distill student \(f_S(\,\cdot\,;\boldsymbol{\theta})\) on \(\mathcal D_j^{(k)}\) with \(f_T\):
    \[
      \hat{\boldsymbol{\theta}}\bigl(\mathcal D_j^{(k)}\bigr)
      \;\approx\;
      \arg\min_{\boldsymbol{\theta}}\;\frac{1}{|\mathcal D_j^{(k)}|}
      \sum_{i\in \mathcal D_j^{(k)}}
      \Bigl[\,
        \max_{\boldsymbol{\delta}_{S,i}\in\Delta}
       L_{\mathrm{AD}}(f_S, f_T, \mathbf{x}_i, \boldsymbol{\delta}_{S,i})
      \Bigr].
    \]
    \State Find adversarial perturbation $\boldsymbol{\delta}_j \coloneq \boldsymbol{\delta}(\mathbf{x}, \mathbf{y}, \mathcal D_j^{(k)})$ that approximately solves
    \[
          \max_{\boldsymbol{\delta} \in \Delta} L_{\text{max}}\bigl( f_{\hat{\boldsymbol{\theta}}(\mathcal D_j^{(k)})}(\mathbf{x} + \boldsymbol{\delta})\, , \mathbf{y} \bigr),
    \]
    \State Evaluate the adversarial student prediction
    \(\hat{\mathbf{y}}_j := f_S(\mathbf{x}+\boldsymbol{\delta}_j;\,\hat{\boldsymbol{\theta}}(\mathcal D_j^{(k)}))\).
  \EndFor
  \State Aggregate via the geometric mean on the simplex:
    \[
      \bar{\mathbf{y}}
      :=\frac{1}{Z}\exp\!\left(\frac{1}{N}\sum_{j=1}^N\log \hat{\mathbf{y}}_j\right),
      \quad Z\text{ is a normalization constant}.
    \]
  \State Compute the KL‐based variance for split \(k\):
    \[
      \widehat{\mathrm{AVar}}_{\mathrm{KL}}\bigl(\mathbf{x},\mathbf{y},\mathcal D^{(k)}\bigr)
      =
      \frac{1}{N}\sum_{j=1}^N
      \mathrm{KL}\!\bigl(\bar{\mathbf{y}} \,\|\, \hat{\mathbf{y}}_j\bigr).
    \]
\EndFor
\State \textbf{Return} the averaged adversarial variance
\[
  \widehat{\mathrm{AVar}}_{\mathrm{KL}}(\mathbf{x},\mathbf{y})
  =\frac{1}{K}\sum_{k=1}^K
    \widehat{\mathrm{AVar}}_{\mathrm{KL}}\bigl(\mathbf{x},\mathbf{y},\mathcal D^{(k)}\bigr).
\]
\end{algorithmic}
\end{algorithm}

To further clarify the theoretical interpretation of our measure, we restate and prove a bias–variance decomposition of the expected adversarial error, following formulations introduced in prior works~\citep{yu2021understanding, zhou2021rethinking}. 
We explicitly show how the adversarial prediction error decomposes into three terms: intrinsic noise, adversarial bias, and adversarial variance. This derivation justifies our use of KL-based adversarial variance as a meaningful quantity for analyzing robustness under adversarial training and distillation.

We aim to show that the expected adversarial error satisfies the following lemma:
\begin{lemma}[Decomposition of Expected Adversarial Error]
\label{lem:adv_risk_decomp}
Let $(\mathbf{x},\mathbf{y})$ be a test example, where $\mathbf{y}=[y_1,\dots,y_C]^\top\in\Delta^{C-1}$ is the target class–probability vector over $C$ classes. Let $\mathcal{D}$ denote the training dataset. Define the adversarial prediction
\begin{equation}
\hat{\mathbf{y}}
\;=\;
f_{\hat{\boldsymbol{\theta}}(\mathcal{D})}\!\big(\mathbf{x}+\boldsymbol{\delta}(\mathbf{x},\mathbf{y},\mathcal{D})\big),
\end{equation}
where $\boldsymbol{\delta}(\mathbf{x},\mathbf{y},\mathcal{D})$ is the worst–case perturbation within the chosen threat model. Let the \emph{normalized geometric mean} of predictions across $\mathcal{D}$ be
\begin{equation}
\bar{\mathbf{y}}
\;:=\;
\frac{1}{Z(\mathbf{x})}\exp\!\Big(\mathbb{E}_{\mathcal{D}}[\log \hat{\mathbf{y}}]\Big),
\qquad
Z(\mathbf{x}) \text{ chosen so } \sum_{c=1}^C \bar{y}_c=1.
\end{equation}
The expected adversarial cross–entropy
\begin{equation}
\mathrm{CE}(\mathbf{y},\hat{\mathbf{y}}) \;:=\; -\sum_{c=1}^{C} y_c \log \hat{y}_c
\end{equation}
admits the decomposition
\begin{equation}
\mathbb{E}_{\mathbf{x},\mathcal{D}} \left[ \mathrm{CE}(\mathbf{y}, \hat{\mathbf{y}}) \right]
=
\underbrace{\mathbb{E}_{\mathbf{x}} \left[ - \mathbf{y} \log \mathbf{y} \right]}_{\text{Intrinsic Noise}}
+
\underbrace{\mathbb{E}_{\mathbf{x}} \left[ \mathbf{y} \log \frac{\mathbf{y}}{\bar{\mathbf{y}}} \right]}_{\text{Adversarial Bias}}
+
\underbrace{\mathbb{E}_{\mathbf{x},\mathcal{D}} \left[ \mathrm{KL}(\bar{\mathbf{y}} \,\|\, \hat{\mathbf{y}}) \right]}_{\text{Adversarial Variance}}.
\end{equation}
\end{lemma}

\begin{proof}
Fix a test point \((\mathbf{x}, \mathbf{y})\). By definition,
\begin{equation}
\mathrm{CE}(\mathbf{y},\hat{\mathbf{y}})
=-\,{\mathbf{y}}\log\hat{\mathbf{y}}.
\end{equation}
We add and subtract the term \({\mathbf{y}}\log \bar{\mathbf{y}}\), yielding:
\begin{equation}
\mathrm{CE}({\mathbf{y}},\hat{\mathbf{y}})
= -\,{\mathbf{y}}\log\hat{\mathbf{y}}
= -\,{\mathbf{y}}\log{\mathbf{y}}
  + {\mathbf{y}}\log\frac{{\mathbf{y}}}{\bar{\mathbf{y}}}
  + {\mathbf{y}}\log\frac{\bar{\mathbf{y}}}{\hat{\mathbf{y}}}.
\end{equation}
Taking expectation over \(\mathcal{D}\) on both sides gives:
\begin{equation}
\mathbb{E}_\mathcal{D}\bigl[\mathrm{CE}({\mathbf{y}},\hat{\mathbf{y}})\bigr]
= -\,{\mathbf{y}}\log {\mathbf{y}} 
  + {\mathbf{y}}\log\frac{{\mathbf{y}}}{\bar {\mathbf{y}}}
  + \mathbb{E}_\mathcal{D}\!\Bigl[{\mathbf{y}}\log\frac{\bar {\mathbf{y}}}{\hat {\mathbf{y}}}\Bigr].
\end{equation}
The first term corresponds to the intrinsic noise, the second to adversarial bias, and it remains to show that the third term equals the adversarial variance:
\begin{equation}
\mathbb{E}_\mathcal{D}\!\Bigl[{\mathbf{y}}\log\frac{\bar {\mathbf{y}}}{\hat {\mathbf{y}}}\Bigr]
=
\mathbb{E}_\mathcal{D}\bigl[\mathrm{KL}(\bar {\mathbf{y}}\|\hat {\mathbf{y}})\bigr].
\end{equation}
To see this, recall that \(\log \bar {\mathbf{y}} = \mathbb{E}_\mathcal{D}[\log \hat {\mathbf{y}}] - \log Z\) component-wise. For each class \(c\),
\begin{equation}
\mathbb{E}_\mathcal{D}\Bigl[y_c \log \tfrac{\bar y_c}{\hat y_c}\Bigr]
= y_c\left(\mathbb{E}_\mathcal{D}[\log \hat y_c] - \log Z\right) - y_c\,\mathbb{E}_\mathcal{D}[\log \hat y_c]
= -y_c \log Z.
\end{equation}
Summing over all classes gives:
\begin{equation}
\mathbb{E}_\mathcal{D}\!\Bigl[{\mathbf{y}}\log\frac{\bar{\mathbf{y}}}{\hat{\mathbf{y}} }\Bigr]
= -\log Z\sum_c y_c
= -\log Z.
\end{equation}
On the other hand:
\begin{equation}
\begin{aligned}
\mathbb{E}_\mathcal{D}[\mathrm{KL}(\bar{\mathbf{y}} \| \hat{\mathbf{y}})]
&= \mathbb{E}_\mathcal{D}\left[\sum_c \bar y_c \log \frac{\bar y_c}{\hat y_c} \right] \\
&= \sum_c \bar y_c \left(\mathbb{E}_\mathcal{D}[\log \hat y_c] - \log Z - \mathbb{E}_\mathcal{D}[\log \hat y_c] \right) \\
&= -\log Z \sum_c \bar y_c = -\log Z.
\end{aligned}
\end{equation}
Since \(\sum_c y_c = \sum_c \bar y_c = 1\), the two expressions are equal, completing the proof.
\end{proof}

\subsection{Proposed Method Details}
\label{sec:supp_method_details}

\subsubsection{From TAS to the surrogate loss}
\label{sec:supp_tas_to_loss}
Let $C$ be the number of classes and $\Delta^{C-1}$ the probability simplex.
For an input $\mathbf{x}$, define the teacher's predictive distributions under student- and teacher-crafted adversarial perturbations by
\begin{equation}
\boldsymbol p(\mathbf{x}) \coloneqq f_T(\mathbf{x}+\boldsymbol{\delta}_S) \in \Delta^{C-1}, 
\qquad
\boldsymbol q(\mathbf{x}) \coloneqq f_T(\mathbf{x}+\boldsymbol{\delta}_T) \in \Delta^{C-1},
\end{equation}
with components $p_i = [\boldsymbol p(\mathbf{x})]_i$ and $q_i = [\boldsymbol q(\mathbf{x})]_i$.
We define the TAS score by
\begin{equation}
\mathrm{TAS}(\mathbf{x})
\;:=\;
\mathrm{KL}\!\big(\boldsymbol p(\mathbf{x}) \,\|\, f_T(\mathbf{x})\big)
\;-\;
\mathrm{KL}\!\big(\boldsymbol p(\mathbf{x}) \,\|\, \boldsymbol q(\mathbf{x})\big).
\end{equation}
We call $\mathbf{x}$ a transferable adversarial sample if $\mathrm{TAS}(\mathbf{x})\ge 0$, which is equivalent to
\begin{equation}
\mathrm{KL}\!\big(f_T(\mathbf{x}+\boldsymbol{\delta}_S)\,\|\, f_T(\mathbf{x})\big)
\;\ge\;
\mathrm{KL}\!\big(f_T(\mathbf{x}+\boldsymbol{\delta}_S)\,\|\, f_T(\mathbf{x}+\boldsymbol{\delta}_T)\big).
\end{equation}

In the main text, for analysis, we use \(\mathrm{TAS}(\mathbf{x})\!\ge\!0\) to decide whether a sample is TAS; otherwise it is non-TAS.
It cleanly separates samples and reveals how the two groups differ in (i) the entropy of teacher logits on student-crafted adversarial inputs (\Cref{fig:ent_tas}), (ii) adversarial variance when training on each subset after a warm-up phase (\Cref{fig:var_tas}), and (iii) robust overfitting trajectories (\Cref{fig:rob_tas}). 
These results show that the non-TAS subset concentrates low-entropy, high-variance supervision and drives robust overfitting, whereas the TAS subset exhibits higher entropy and more stable generalization.

While the binary split is useful for diagnostics, transferability is not inherently binary for training: samples lie at different distances from the boundary \(\mathrm{TAS}(\mathbf{x}){=}0\), stochasticity near that boundary can flip membership, and discarding non-TAS samples is data-inefficient. 
Therefore, for training we replace the binary split by a continuous score, which we map to sample weights as in \eqref{eq:saad}. 

For training-time efficiency, we avoid computing the teacher-side perturbation $\boldsymbol{\delta}_T$ and instead use an entropy-based proxy on $f_T(\mathbf{x}+\boldsymbol{\delta}_S)$ in \eqref{eq:saad}. 
Assuming a strong white-box $\boldsymbol{\delta}_T$ yields a high-entropy, non-degenerate teacher distribution $\boldsymbol q(\mathbf{x})=f_T(\mathbf{x}+\boldsymbol{\delta}_T)$ (so $m=\min_i q_i>0$), we have:
\begin{lemma}[Entropy-based lower bound on $\mathrm{TAS}$]
\label{lem:tas-entropy-lb}
Assume the teacher's adversarial output satisfies $m=\min_{i} q_i>0$, where $\boldsymbol q(\mathbf{x})=f_T(\mathbf{x}+\boldsymbol{\delta}_T)$ and $q_i=[\boldsymbol q(\mathbf{x})]_i$. Then
\begin{equation}
\mathrm{TAS}(\mathbf{x})
\;\ge\;
H\!\left(f_T(\mathbf{x}+\boldsymbol{\delta}_S)\right)
\;+\;
\log m.
\end{equation}
\end{lemma}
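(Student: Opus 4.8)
The plan is to bound $\mathrm{TAS}(\mathbf{x})$ from below by discarding one nonnegative term and then controlling the remaining term by the entropy plus a uniform lower bound on $\boldsymbol q$. Writing $\boldsymbol p = f_T(\mathbf{x}+\boldsymbol{\delta}_S)$, the definition gives
\[
\mathrm{TAS}(\mathbf{x})
=
\underbrace{\mathrm{KL}\!\big(\boldsymbol p \,\|\, f_T(\mathbf{x})\big)}_{\ge 0}
\;-\;
\mathrm{KL}(\boldsymbol p \,\|\, \boldsymbol q).
\]
The first step is to invoke the nonnegativity of the Kullback--Leibler divergence (Gibbs' inequality) to drop the first term, yielding the clean lower bound $\mathrm{TAS}(\mathbf{x}) \ge -\,\mathrm{KL}(\boldsymbol p \,\|\, \boldsymbol q)$. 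This is the only place the clean-input teacher distribution $f_T(\mathbf{x})$ is used, and it disappears entirely from the bound, which is exactly why the resulting surrogate is computable without $\boldsymbol{\delta}_T$ being known at the clean anchor.

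Next I would expand the surviving divergence in terms of entropy and cross-entropy:
\[
-\,\mathrm{KL}(\boldsymbol p \,\|\, \boldsymbol q)
=
-\sum_i p_i \log \frac{p_i}{q_i}
=
\Big(-\sum_i p_i \log p_i\Big) + \sum_i p_i \log q_i
=
H(\boldsymbol p) + \sum_i p_i \log q_i.
\]
The third step uses the hypothesis $m = \min_i q_i > 0$: since $\log$ is monotone, $\log q_i \ge \log m$ for every $i$, and because $\boldsymbol p$ is a probability vector with $\sum_i p_i = 1$, the cross term satisfies $\sum_i p_i \log q_i \ge \log m \sum_i p_i = \log m$. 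Chaining the three inequalities gives $\mathrm{TAS}(\mathbf{x}) \ge H(\boldsymbol p) + \log m = H\!\left(f_T(\mathbf{x}+\boldsymbol{\delta}_S)\right) + \log m$, which is the claim.

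There is no genuine analytic obstacle here; the whole argument is a two-line manipulation once the decomposition is written down. The only point requiring a little care is bookkeeping the direction of the KL arguments: the entropy that appears is that of $\boldsymbol p = f_T(\mathbf{x}+\boldsymbol{\delta}_S)$ (the first slot of both divergences), not that of $\boldsymbol q$, and it is precisely this alignment that lets $H(\boldsymbol p)$ emerge cleanly from $-\mathrm{KL}(\boldsymbol p\,\|\,\boldsymbol q)$. The conceptual content is that the hypothesis $m>0$ (a nondegenerate, high-entropy teacher response to its own attack) converts the intractable TAS criterion into a bound driven by the single computable quantity $H(f_T(\mathbf{x}+\boldsymbol{\delta}_S))$, justifying the entropy weights $w_i$ used in \eqref{eq:saad}.
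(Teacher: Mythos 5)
Your proof is correct and follows essentially the same route as the paper's: drop the nonnegative term $\mathrm{KL}(\boldsymbol p\,\|\,f_T(\mathbf{x}))$, expand $-\mathrm{KL}(\boldsymbol p\,\|\,\boldsymbol q)$ as $H(\boldsymbol p)+\sum_i p_i\log q_i$, and bound the cross term below by $\log m$. The only difference is cosmetic ordering of the two inequalities, so there is nothing to add.
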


\begin{proof}
Let $H(\boldsymbol p)=-\sum_i p_i\log p_i$. By the definition of KL divergence,
\begin{equation}
\mathrm{KL}(\boldsymbol p\|\boldsymbol q)
=
-\,H(\boldsymbol p)\;-\;\sum_i p_i \log q_i
\;\le\;
-\,H(\boldsymbol p)\;-\;\log m.
\end{equation}
Since $\mathrm{TAS}(\mathbf{x})
=
\mathrm{KL}\!\big(\boldsymbol p\|f_T(\mathbf{x})\big)
-
\mathrm{KL}(\boldsymbol p\|\boldsymbol q)$,
we obtain
\begin{equation}
\mathrm{TAS}(\mathbf{x})
\;\ge\;
H(\boldsymbol p)
+
\log m
=
H\!\left(f_T(\mathbf{x}+\boldsymbol{\delta}_S)\right)
+
\log m.
\end{equation}
\end{proof}

While this bound provides conceptual motivation, we acknowledge that for overconfident teachers, $m$ can be extremely small, making the lower bound vacuous in practice. Therefore, rather than a strict formal justification, we adopt this entropy-based formulation as an empirically motivated heuristic for sample-level transferability: higher teacher entropy empirically correlates with higher transferability, providing a computation-friendly proxy without evaluating $\boldsymbol{\delta}_T$.

\subsubsection{Revisiting Previous Adversarial Distillation Methods}
\label{sec:supp_revisit_ad}
\begin{table}[t]
    \centering
    \caption{Comparison of inner maximization ($L_{\max}$), outer minimization ($L_{\min}$) for various AD methods. As IGDM follows a modular design, $L_\mathrm{AD}$ can be any other AD outer minimization loss.}
    \label{tab:sub_exsiting_ad}
    \begin{tabular}{lcc}
    \toprule
    Method &  Inner Maximization  &Outer Minimization \\
    \midrule
   ARD          &  $\mathrm{CE}\bigl(\mathbf{y}, f_{S}(\mathbf{x}+\boldsymbol{\delta})\bigr)$&$\mathrm{KL}\bigl(f_{T}(\mathbf{x}) \| f_{S}(\mathbf{x}+\boldsymbol{\delta})\bigr)$   \\

    RSLAD    
      &        $\mathrm{KL}\bigl(f_{T}(\mathbf{x}) \| f_{S}(\mathbf{x}+\boldsymbol{\delta})\bigr)$   &$\mathrm{KL}\bigl(f_{T}(\mathbf{x}) \| f_{S}(\mathbf{x}+\boldsymbol{\delta})\bigr)$   \\

    AdaAD    
      &        $\mathrm{KL}\bigl(f_{T}(\mathbf{x}+\boldsymbol{\delta}) \| f_{S}(\mathbf{x}+\boldsymbol{\delta})\bigr)$   &$\mathrm{KL}\bigl(f_{T}(\mathbf{x}+\boldsymbol{\delta}) \| f_{S}(\mathbf{x}+\boldsymbol{\delta})\bigr)$   \\

    IGDM     
      &   $\mathrm{KL}\bigl(f_{T}(\mathbf{x}+\boldsymbol{\delta}) \| f_{S}(\mathbf{x}+\boldsymbol{\delta})\bigr)$   &$L_\mathrm{AD} \!+\! \alpha_\mathrm{IGDM}\!\cdot\!\mathrm{KL}\bigl(f_{T}(\mathbf{x}+\boldsymbol{\delta}) \! - \! f_{T}(\mathbf{x}-\boldsymbol{\delta})\| f_{S}(\mathbf{x}+\boldsymbol{\delta}) \!- \!f_{S}(\mathbf{x}-\boldsymbol{\delta})\bigr)$ 
      \\

    \bottomrule
    \end{tabular}
\end{table}

Existing AD methods have largely been designed and evaluated under ERTs, with limited analysis conducted in the context of IRTs.
While overall distillation performance is not effective under IRTs, as shown in \Cref{fig:intro}, \Cref{tab:main_cifar10}, and \Cref{tab:main_cifar100_tiny}, our analysis reveals that some AD methods still perform comparatively better than other AD methods even under IRTs.
\Cref{tab:sub_exsiting_ad} summarizes representative AD methods, focusing on their inner maximization and outer minimization formulations. The differences in these formulations determine how each method responds to the adversarial signal provided by the teacher.
Among them, IGDM further refines robustness alignment by implicitly matching gradients (adversarial direction) through logit difference minimization in the outer optimization. This design can be interpreted as encouraging  transferability between the student and teacher.
Empirically, IGDM consistently yields improved robustness compared to prior methods, particularly under IRTs, which motivates our decision to adopt IGDM as our baseline.
Since IGDM is a modular design rather than a complete distillation framework, we adopt it in combination with AdaAD, following the original implementation~\citep{IGDM}.
Therefore, the overall SAAD minimization loss is defined as:
\begin{equation}
L_{\mathrm{SAAD}} = \frac{1}{N} \sum_{i=1}^N w_i \cdot L_{\mathrm{AD}}(f_S, f_T, \mathbf{x}_i, \boldsymbol{\delta}_{i}), \quad w_i := H\left(f_T(\mathbf{x}_i + \boldsymbol{\delta}_{i})\right),
\end{equation}
where \( H(\cdot) \) denotes the entropy function, and \( L_{\mathrm{AD}} \) represents the base adversarial distillation loss using AdaAD with IGDM, formulated as:
\begin{equation}
\label{eq:ad_loss}
\begin{aligned}
L_{\mathrm{AD}}(f_S, f_T, \mathbf{x}_i, \boldsymbol{\delta}_{i}) &= \mathrm{KL}\left(f_T(\mathbf{x}_i + \boldsymbol{\delta}_{i}) \,\|\, f_S(\mathbf{x}_i + \boldsymbol{\delta}_{i})\right)  \\ 
&\quad + \alpha_\mathrm{IGDM} \cdot \mathrm{KL}\left(f_T(\mathbf{x}_i + \boldsymbol{\delta}_{i}) - f_T(\mathbf{x}_i)\,\|\, f_S(\mathbf{x}_i + \boldsymbol{\delta}_{i}) - f_S(\mathbf{x}_i)\right).
\end{aligned}
\end{equation}
In results, our baseline implementation follows AdaAD with the IGDM module, along with an weight averaging (SWA)~\citep{izmailov2018averaging}.

\subsubsection{Fast Inner Maximization via Linear Approximation}
We also introduce a lightweight inner maximization strategy to reduce computational cost. 
This technique is a practical enhancement to the SAAD framework by avoiding repeated teacher backpropagation, enabling efficient training without compromising robustness.
Recent AD methods have adopted iterative inner maximization procedures involving teacher backpropagation~\citep{adaad, IGDM}, a strategy that has empirically led to stronger robustness in distilled students.
However, as the size of the teacher model increases, this process becomes computationally expensive.
To alleviate this cost, we introduce an approximation technique that reduces the number of teacher backpropagations from multiple iterations to a single step. 
Inspired by the locally linear behavior of adversarially trained models discussed in prior work~\citep{IGDM}, we linearize the teacher’s output around the input $\mathbf{x}$ using a first-order Taylor expansion: 
\begin{equation} 
f_T(\mathbf{x} + \boldsymbol{\delta}) \approx f_T(\mathbf{x}) + \langle \nabla_\mathbf{x} f_T(\mathbf{x}), \boldsymbol{\delta} \rangle,
\end{equation} 
and substitute this into the inner maximization objective:
\begin{equation} 
\max_{\boldsymbol{\delta} \in \Delta}  {\mathrm{KL}}\left(f_T(\mathbf{x} + \boldsymbol{\delta})\, \|\, f_S(\mathbf{x} + \boldsymbol{\delta})\right),
\end{equation} 
which results in the following approximation: 
\begin{equation}
\max_{\boldsymbol{\delta} \in \Delta}  {\mathrm{KL}}\left(f_T(\mathbf{x} + \boldsymbol{\delta})\, \|\, f_S(\mathbf{x}) + \langle \nabla_\mathbf{x} f_S(\mathbf{x}), \boldsymbol{\delta} \rangle\right).
\end{equation}
This formulation enables a technically efficient alternative to conventional inner maximization by bypassing the need for iterative teacher backpropagation while preserving the gradient-guided adversarial direction.
\Cref{alg:saad-inner} details the procedure. Given a clean input and a true label, we first compute the teacher logits and the corresponding input gradient. At each step, the KL divergence is computed between the student output and the corrected teacher logits.
By adopting this approximation, we achieve a nearly 4$\times$ speed-up in the inner maximization step compared to the original AdaAD formulation in \Cref{tab:supp_compute_resource}, while maintaining comparable robustness and distillation performance.

\begin{algorithm}[t]
\caption{Fast Inner Maximization with First-Order Teacher Logit Correction}
\label{alg:saad-inner}
\begin{algorithmic}[1]
\Require Input \(\mathbf{x}\), true label \(y\), student \(f_S\), teacher \(f_T\), step size \(\eta\), perturbation bound \(\epsilon\), steps \(K\), correction weight \(\lambda_{\mathrm{in}}\)
\State Compute teacher logits on \(\mathbf{x}\): \(\ell_T\)
\State Compute gradient \(\nabla_\mathbf{x} \ell_T^y(\mathbf{x})\), the input gradient of the logit corresponding to class \(y\)
\State Initialize perturbation: \(\boldsymbol{\delta} \gets 0.001 \cdot \mathcal{N}(0, I)\)
\For{\(k = 1\) to \(K\)}
    \State Compute correction term: 
    \[
    \Delta \ell^y \gets \lambda_{\mathrm{in}} \cdot \langle \nabla_\mathbf{x} \ell_T^y(\mathbf{x}), \boldsymbol{\delta} \rangle
    \]
    \State Construct corrected logits: \(\tilde{\ell}_T \gets \ell_T\), with \(\tilde{\ell}_T^y \gets \ell_T^y + \Delta \ell^y\)
    \State Compute loss: \(\mathcal{L}_{\mathrm{KL}} := \mathrm{KL}\left(\mathrm{softmax}(\tilde{\ell}_T) \,\|\, f_S(\mathbf{x} + \boldsymbol{\delta})\right)\)
    \State Update perturbation:
    \[
    \boldsymbol{\delta} \gets \mathrm{Proj}_{\boldsymbol{\delta} \in \Delta} \left( \boldsymbol{\delta} + \eta \cdot \mathrm{sign}\left( \nabla_{\boldsymbol{\delta}} \mathcal{L}_{\mathrm{KL}} \right) \right)
    \]
    \State Project \(\mathbf{x} + \boldsymbol{\delta} \in [0, 1]^d\)
\EndFor
\State \Return \(\mathbf{x} + \boldsymbol{\delta}\)
\end{algorithmic}
\end{algorithm}

\subsubsection{Main Algorithm}
\label{sec:algorithms}
The overall training procedure for SAAD and SAAD-C is outlined in \Cref{alg:saad-train}.

\begin{algorithm}[h]
\caption{Training Algorithm for SAAD and SAAD-C}
\label{alg:saad-train}
\begin{algorithmic}[1]
\Require Teacher $f_T$, student $f_S$, training set $\mathcal{D}$, step size $\eta$, total epochs $E$, perturbation bound $\epsilon$, inner steps $K$, weights $\lambda_{\text{in}}, \alpha_{\text{IGDM}}, \beta$
\For{epoch $= 1$ to $E$}
    \For{each minibatch $\{(\mathbf{x}_i, y_i)\}_{i=1}^B \sim \mathcal{D}$}
        \State Generate $\boldsymbol{\delta}_i$ for each $\mathbf{x}_i$ using Algorithm~\ref{alg:saad-inner}
        \State Compute per-sample entropy: $w_i \gets H(f_T(\mathbf{x}_i + \boldsymbol{\delta}_i))$
        \State Normalize: $\hat{w}_i \gets \mathrm{Normalize}(w_i)$
        \State Compute loss $L_{\mathrm{AD}}$ as in \eqref{eq:ad_loss}
        \State Compute total loss for SAAD \eqref{eq:saad}:
        \[
        \ell_i \gets w_i \cdot L_{\mathrm{AD}}(\mathbf{x}_i, \boldsymbol{\delta}_i)
        \]
        \If{SAAD-C}
            \State Add clean distillation:
            \[
            \ell_i \gets \ell_i + \beta \cdot (1 - \hat{w}_i) \cdot \mathrm{KL}(f_T(\mathbf{x}_i)\,\|\,f_S(\mathbf{x}_i))
            \]
        \EndIf
        \State $\mathcal{L} \gets \frac{1}{B} \sum_{i=1}^B \ell_i$
        \State Update $f_S$ via gradient descent: $\boldsymbol\theta \leftarrow \boldsymbol\theta - \eta \nabla_{\boldsymbol\theta} \mathcal{L}$
    \EndFor
    \If{epoch $\geq$ 95}
        \State Update SWA parameters
    \EndIf
\EndFor
\end{algorithmic}
\end{algorithm}

\section{Experimental Details}
\label{sec:supp_training_details}
\subsection{Adversarial Distillation Settings}

We conduct experiments on CIFAR-10, CIFAR-100, and Tiny-ImageNet, applying standard data augmentations (random cropping and horizontal flipping). 
All student models are trained for 200 epochs using SGD with momentum \(0.9\), weight decay \(5 \times 10^{-4}\), and an initial learning rate of 0.1, which is decayed by a factor of 10 at the 100th and 150th epochs. 
The batch size is fixed to 128 across all experiments.

We compare two adversarial training baselines—PGD-AT and TRADES—alongside six recent adversarial distillation methods: ARD, IAD, RSLAD, AKD, AdaAD, and IGDM. 
As IGDM is a modular technique, we evaluate its performance in conjunction with AdaAD (i.e., AdaAD+IGDM), which consistently yields the best results among IGDM-integrated variants.
Following the original design, we set the IGDM hyperparameter \(\alpha_{\mathrm{IGDM}}\) to 1 for CIFAR-10, 20 for CIFAR-100, and 10 for Tiny-ImageNet. 
This configuration is also adopted in our SAAD framework to ensure consistency across evaluations.

For inner maximization in training, we adopt a standard multi-step attack setup with $L_\infty$ perturbations bounded by $\epsilon = 8/255$, step size $2/255$, and 10 iterations. 
Each method follows its original inner maximization formulation, summarized in \Cref{tab:sub_exsiting_ad}.
Specifically, PGD-AT, ARD, IAD, and AKD use student-only cross-entropy loss; TRADES employs KL divergence between clean and perturbed predictions; RSLAD aligns student outputs with teacher predictions on clean inputs; while AdaAD and IGDM match student and teacher predictions under shared perturbations. 
Our proposed SAAD framework follows the inner-outer structure described in \Cref{alg:saad-inner}.

\subsection{Selected Hyperparameters}

We document the selected values of the clean distillation weighting coefficient \(\beta\), which controls the strength of the clean KL divergence term in SAAD-C. 
To determine \(\beta\), we perform a small-scale grid search aiming to maximize clean accuracy while maintaining comparable AutoAttack robustness to the SAAD. 
On CIFAR-10, we set \(\beta = 0.2\) across all combinations of ResNet-18 and MobileNetV2 students with either \texttt{Bartoldson2024Adversarial} or \texttt{Gowal2021Improving} teachers.
On CIFAR-100 and Tiny-ImageNet, we apply a uniform setting of \(\beta = 0.5\).

\subsection{Additional Notes on Figures and Tables}

\paragraph{\Cref{fig:intro_a}.}
All teacher models consistently achieve AA accuracies above 55\%, while the strongest student reaches only ~54\%. This ensures that student underperformance cannot be attributed to weak teacher robustness, and instead reflects the quality of robustness transfer.
 
\paragraph{\Cref{fig:intro_b}.}
Colored vertical bands indicate the same teacher architecture. For visual clarity, we apply a small horizontal jitter within each colored band, so left–right offsets inside a band are purely for visualization and have no meaning.

\paragraph{\Cref{tab:main_robustbench_model}.}
All distillation results are obtained with RSLAD. Robust overfitting (RO) is defined as the drop from the best test PGD-20 accuracy during training to the final accuracy at epoch 200.
With our schedule the peak typically occurs around epochs 100–110.

\paragraph{\Cref{fig:bv2}.}

For \texttt{Rebuffi2021Fixing}, we interpolate the teacher’s soft distribution with the one-hot label as
\begin{equation}
\boldsymbol{t}_\alpha(\mathbf{x}) \;=\; (1-\alpha)\,f_T(\mathbf{x}) \;+\; \alpha\,\mathbf{e}_y,\quad \alpha\in[0,1],
\end{equation}
where \(f_T(\mathbf{x})\) is the teacher’s probability vector and \(\mathbf{e}_y\) is the one-hot vector for class \(y\).
When \(\alpha=0\) the target is the teacher distribution; when \(\alpha=1\) it is purely one-hot.
Orange points correspond to different \(\alpha\); numeric annotations indicate the one-hot proportion.

\paragraph{\Cref{tab:tas_nontas_result}.}
CIFAR-10 with a ResNet-18 student distilled from \texttt{Bartoldson2024Adversarial} using RSLAD. The first 90 epochs are a warm-up on the full 50{,}000 examples. At epoch 90, we partition the training set into TAS and Non-TAS using \eqref{eq:tas}; this split is then fixed for the remaining epochs. 

\paragraph{\Cref{tab:exp_var_ro}.}
CIFAR-10 with a ResNet-18 student distilled from \texttt{Bartoldson2024Adversarial}. “Baseline” denotes plain RSLAD (no sample-wise weighting).

\paragraph{\Cref{tab:good_teacher}.}
CIFAR-10 with a ResNet-18 student distilled from the ERT \texttt{Chen2021LTD\_WRN34\_20}. Under this ERT, SAAD matches or slightly exceeds IGDM, which is consistent with our design goal: SAAD targets failure modes that arise under low transferability, and it does not aim to outperform existing methods when robustness transfer is already strong.
Although one might hope to simply \textit{pick an ERT} and avoid transferability issues, in practice it is rarely known in advance whether a given teacher will behave as an ERT or an IRT, and the teacher is often fixed (e.g., from a model zoo or an upstream system). SAAD therefore serves as a robust default: it consistently improves robustness when transferability is weak, while incurring no degradation when transferability is strong.

\section{Additional Experiments}
\label{sec:supp_addition_exp}

\subsection{Statistical Report}
\label{sec:supp_statistic}
The main paper reports only mean results to preserve readability and avoid excessive font reduction in dense tables. Here, we provide full results, including standard deviations for three random seeds. 
See \Cref{tab:supp_statistical_exp} for CIFAR-10 and \Cref{tab:supp_statistical_exp2} for CIFAR-100 and Tiny-ImageNet.

\begin{table*}[t]
 \centering
 \caption{
 Adversarial distillation results using two teacher and two student models on CIFAR-10. Clean, FGSM, PGD, C\&W, and AA columns report accuracy (\%) under each evaluation setting. Results are averaged over three random seeds with standard deviations.}
 \setlength{\tabcolsep}{3.5pt}
 \renewcommand{\arraystretch}{1.0}
 \begin{adjustbox}{width=\linewidth}
 \begin{tabular}{llcccccccccc}
 \toprule
 \multicolumn{1}{c}{\multirow{2}{*}{\rotatebox[origin=c]{90}{Model}}} & \multicolumn{1}{c}{\multirow{2}{*}{Method}} & \multicolumn{5}{c}{\texttt{Bartoldson2024Adversarial}}& \multicolumn{5}{c}{\texttt{Gowal2021Improving}}\\
 \cmidrule(r){3-7}
\cmidrule(r){8-12}
&  & Clean  &FGSM& PGD  &C\&W& AA  & Clean  & FGSM& PGD  & C\&W&AA  \\
 \midrule
\multicolumn{1}{l}{\multirow{10}{*}{\rotatebox[origin=c]{90}{ResNet-18}}}  &PGD-AT& 84.27{\scriptsize$\pm$0.10}& 52.10{\scriptsize$\pm$0.34}& 42.34{\scriptsize$\pm$0.09}& 42.29{\scriptsize$\pm$0.07}&40.85{\scriptsize$\pm$0.14}
& 84.27{\scriptsize$\pm$0.10}& 52.10{\scriptsize$\pm$0.34}& 42.34{\scriptsize$\pm$0.09}& 42.29{\scriptsize$\pm$0.07}&40.85{\scriptsize$\pm$0.14}
\\
  &TRADES& 82.70{\scriptsize$\pm$0.10}& 57.14{\scriptsize$\pm$0.48}& 48.81{\scriptsize$\pm$0.52}& 48.08{\scriptsize$\pm$0.54}&46.46{\scriptsize$\pm$0.57}& 82.70{\scriptsize$\pm$0.10}& 57.14{\scriptsize$\pm$0.48}& 48.81{\scriptsize$\pm$0.52}& 48.08{\scriptsize$\pm$0.54}&46.46{\scriptsize$\pm$0.57}
\\
  &ARD& 84.63{\scriptsize$\pm$0.15}&56.57{\scriptsize$\pm$0.45}& 44.48{\scriptsize$\pm$0.26}&43.44{\scriptsize$\pm$0.45}& 41.66{\scriptsize$\pm$0.53}& 84.39{\scriptsize$\pm$0.28}& 52.47{\scriptsize$\pm$0.35}& 42.18{\scriptsize$\pm$0.32}& 42.22{\scriptsize$\pm$0.21}&40.79{\scriptsize$\pm$0.40}
\\
   &IAD& 84.43{\scriptsize$\pm$0.25}& 56.64{\scriptsize$\pm$0.20}& 44.82{\scriptsize$\pm$0.15}& 43.47{\scriptsize$\pm$0.11}&41.80{\scriptsize$\pm$0.15}& 84.28{\scriptsize$\pm$0.29}& 52.25{\scriptsize$\pm$0.13}& 42.16{\scriptsize$\pm$0.24}& 42.19{\scriptsize$\pm$0.09}&40.70{\scriptsize$\pm$0.10}
\\
  &RSLAD & 84.28{\scriptsize$\pm$0.11}&57.20{\scriptsize$\pm$0.62}& 47.17{\scriptsize$\pm$0.33}&46.07{\scriptsize$\pm$0.42}&44.42{\scriptsize$\pm$0.34}& 83.83{\scriptsize$\pm$0.36}& 51.59{\scriptsize$\pm$0.44}& 42.03{\scriptsize$\pm$0.31}& 42.22{\scriptsize$\pm$0.28}&40.57{\scriptsize$\pm$0.38}
\\
   &AKD& 84.62{\scriptsize$\pm$0.14}& 56.29{\scriptsize$\pm$0.08}& 44.42{\scriptsize$\pm$0.09}& 43.43{\scriptsize$\pm$0.12}&41.79{\scriptsize$\pm$0.12}& 84.32{\scriptsize$\pm$0.19}& 52.13{\scriptsize$\pm$0.34}& 42.38{\scriptsize$\pm$0.07}& 42.44{\scriptsize$\pm$0.17}&40.95{\scriptsize$\pm$0.06}
\\
  &AdaAD &\underline{85.07}{\scriptsize$\pm$0.07}&57.54{\scriptsize$\pm$0.21}& 47.16{\scriptsize$\pm$0.21}&46.06{\scriptsize$\pm$0.18}& 44.55{\scriptsize$\pm$0.17} & 85.04{\scriptsize$\pm$0.11}& 53.85{\scriptsize$\pm$0.23}& 44.65{\scriptsize$\pm$0.28}& 44.90{\scriptsize$\pm$0.15}&43.27{\scriptsize$\pm$0.18}
\\
  &IGDM& 84.75{\scriptsize$\pm$0.18}&58.38{\scriptsize$\pm$0.32}& 47.56{\scriptsize$\pm$0.25}&46.43{\scriptsize$\pm$0.21}&  44.94{\scriptsize$\pm$0.16}& \underline{85.67}{\scriptsize$\pm$0.22}& 58.14{\scriptsize$\pm$0.44}& 48.58{\scriptsize$\pm$0.18}& 46.98{\scriptsize$\pm$0.40}&44.76{\scriptsize$\pm$0.52}\\
  &\textbf{SAAD-C}& \textbf{85.54}{\scriptsize$\pm$0.01}&\textbf{61.92}{\scriptsize$\pm$0.11}& \underline{53.18}{\scriptsize$\pm$0.21}&\underline{52.05}{\scriptsize$\pm$0.30}& \underline{50.14}{\scriptsize$\pm$0.24}& \textbf{86.39}{\scriptsize$\pm$0.01}& \textbf{60.55}{\scriptsize$\pm$0.08}& \underline{51.91}{\scriptsize$\pm$0.45}& \underline{52.06}{\scriptsize$\pm$0.14}&\underline{49.72}{\scriptsize$\pm$0.16}\\
  &\textbf{SAAD}& 84.27{\scriptsize$\pm$0.18}&\underline{61.44}{\scriptsize$\pm$0.25}& \textbf{53.39}{\scriptsize$\pm$0.23}&\textbf{52.39}{\scriptsize$\pm$0.28}& \textbf{50.34}{\scriptsize$\pm$0.08}& 83.69{\scriptsize$\pm$0.18}& \underline{59.74}{\scriptsize$\pm$0.14}& \textbf{52.89}{\scriptsize$\pm$0.40}& \textbf{52.36}{\scriptsize$\pm$0.18}&\textbf{50.35}{\scriptsize$\pm$0.22}\\
  \midrule
\multicolumn{1}{l}{\multirow{10}{*}{\rotatebox[origin=c]{90}{MobileNetV2}}}  &PGD-AT& 83.52{\scriptsize$\pm$0.19}& 54.92{\scriptsize$\pm$0.24}& 44.90{\scriptsize$\pm$0.43}& 44.29{\scriptsize$\pm$0.18}&41.54{\scriptsize$\pm$0.22}& 83.52{\scriptsize$\pm$0.19}& 54.92{\scriptsize$\pm$0.24}& 44.90{\scriptsize$\pm$0.43}& 44.29{\scriptsize$\pm$0.18}&41.54{\scriptsize$\pm$0.22}\\
  &TRADES& 
81.79{\scriptsize$\pm$0.46}& 56.50{\scriptsize$\pm$0.19}& 49.90{\scriptsize$\pm$0.07}& 47.54{\scriptsize$\pm$0.26}&46.50{\scriptsize$\pm$0.14}& 
81.79{\scriptsize$\pm$0.46}& 56.50{\scriptsize$\pm$0.19}& 49.90{\scriptsize$\pm$0.07}& 47.54{\scriptsize$\pm$0.26}&46.50{\scriptsize$\pm$0.14}\\
  &ARD& 83.66{\scriptsize$\pm$0.39}&55.09{\scriptsize$\pm$0.22}& 44.71{\scriptsize$\pm$0.06}&43.49{\scriptsize$\pm$0.11}& 41.24{\scriptsize$\pm$0.09}& 83.62{\scriptsize$\pm$0.09}& 54.62{\scriptsize$\pm$0.48}& 44.60{\scriptsize$\pm$0.31}& 44.18{\scriptsize$\pm$0.25}&41.47{\scriptsize$\pm$0.17}\\
   &IAD& 83.85{\scriptsize$\pm$0.27}& 55.69{\scriptsize$\pm$0.10}& 44.98{\scriptsize$\pm$0.10}& 43.62{\scriptsize$\pm$0.04}&41.35{\scriptsize$\pm$0.05}& 
83.63{\scriptsize$\pm$0.12}& 54.81{\scriptsize$\pm$0.17}& 44.73{\scriptsize$\pm$0.13}& 44.19{\scriptsize$\pm$0.17}&41.51{\scriptsize$\pm$0.03}\\
  &RSLAD & 83.22{\scriptsize$\pm$0.18}&55.54{\scriptsize$\pm$0.30}& 46.09{\scriptsize$\pm$0.79}&44.80{\scriptsize$\pm$0.84}&42.56{\scriptsize$\pm$0.60}& 83.41{\scriptsize$\pm$0.36}& 54.60{\scriptsize$\pm$0.06}& 45.01{\scriptsize$\pm$0.20}& 44.41{\scriptsize$\pm$0.23}&41.78{\scriptsize$\pm$0.16}\\
   &AKD& 83.60{\scriptsize$\pm$0.42}& 55.13{\scriptsize$\pm$0.21}& 44.31{\scriptsize$\pm$0.11}& 43.29{\scriptsize$\pm$0.21}&41.03{\scriptsize$\pm$0.21}& 83.54{\scriptsize$\pm$0.03}& 54.79{\scriptsize$\pm$0.11}& 44.83{\scriptsize$\pm$0.12}& 44.19{\scriptsize$\pm$0.18}&41.55{\scriptsize$\pm$0.05}\\
  &AdaAD &\underline{84.42}{\scriptsize$\pm$0.12}&56.38{\scriptsize$\pm$0.23}& 46.16{\scriptsize$\pm$0.07}&44.99{\scriptsize$\pm$0.12}& 43.01{\scriptsize$\pm$0.11}& \underline{84.37}{\scriptsize$\pm$0.08}& 54.40{\scriptsize$\pm$0.39}& 44.40{\scriptsize$\pm$0.41}& 44.59{\scriptsize$\pm$0.40}&41.95{\scriptsize$\pm$0.49}\\
  &IGDM& 84.07{\scriptsize$\pm$0.09}&57.31{\scriptsize$\pm$0.14}& 47.39{\scriptsize$\pm$0.02}&45.43{\scriptsize$\pm$0.05}&  43.57{\scriptsize$\pm$0.11}& 84.13{\scriptsize$\pm$0.49}& \underline{57.93}{\scriptsize$\pm$0.10}& 48.70{\scriptsize$\pm$0.33}& 47.43{\scriptsize$\pm$0.12}&44.83{\scriptsize$\pm$0.19}\\
 & \textbf{SAAD-C}& \textbf{85.16}{\scriptsize$\pm$0.10}& \textbf{60.53}{\scriptsize$\pm$0.18}& \underline{52.72}{\scriptsize$\pm$0.13}& \underline{51.26}{\scriptsize$\pm$0.20}& \underline{49.34}{\scriptsize$\pm$0.09}& \textbf{84.81}{\scriptsize$\pm$0.23}& \textbf{58.17}{\scriptsize$\pm$0.15}& \underline{51.09}{\scriptsize$\pm$0.35}& \textbf{50.45}{\scriptsize$\pm$0.29}&\underline{48.08}{\scriptsize$\pm$0.23}\\
  &\textbf{SAAD}& 82.04{\scriptsize$\pm$0.04}& \underline{59.48}{\scriptsize$\pm$0.20}& \textbf{53.69}{\scriptsize$\pm$0.19}&\textbf{51.68}{\scriptsize$\pm$0.28}& \textbf{49.88}{\scriptsize$\pm$0.16}& 80.60{\scriptsize$\pm$0.23}& 56.85{\scriptsize$\pm$0.02}& \textbf{51.78}{\scriptsize$\pm$0.05}& \underline{50.25}{\scriptsize$\pm$0.05}&\textbf{48.29}{\scriptsize$\pm$0.10}\\

 \bottomrule
 \end{tabular}
 \end{adjustbox}
 \label{tab:supp_statistical_exp}
\end{table*}

\begin{table*}[t]
 \centering
 \caption{
  Adversarial distillation results using ResNet-18 and PreActResNet-18 as student models for CIFAR-100 and Tiny-ImageNet, respectively, with the \texttt{Wang2023Better} teacher (WRN-70-16 for CIFAR-100 and WRN-28-10 for Tiny-ImageNet). Clean, FGSM, PGD, C\&W, and AA columns report accuracy (\%) under each evaluation setting. Results are averaged over three random seeds with standard deviations.}
 \setlength{\tabcolsep}{3.8pt}
 \renewcommand{\arraystretch}{1.0}
   \begin{adjustbox}{width=\linewidth}
 \begin{tabular}{lcccccccccc}
 \toprule
\multicolumn{1}{c}{\multirow{2}{*}{Method}} &\multicolumn{5}{c}{CIFAR-100}& \multicolumn{5}{c}{Tiny-ImageNet}\\
 \cmidrule(r){2-6}
\cmidrule(r){7-11}
   & Clean  &FGSM& PGD  &C\&W& AA  & Clean  & FGSM& PGD  & C\&W&AA  \\
 \midrule
 PGD-AT& 56.17{\scriptsize$\pm$0.20}& 24.74{\scriptsize$\pm$0.16}& 19.65{\scriptsize$\pm$0.18}& 19.79{\scriptsize$\pm$0.20}&18.66{\scriptsize$\pm$0.18}& 45.71{\scriptsize$\pm$0.33}& 15.75{\scriptsize$\pm$0.05}& 11.67{\scriptsize$\pm$0.43}& 11.82{\scriptsize$\pm$0.45}&10.91{\scriptsize$\pm$0.40}\\
 TRADES& 53.37{\scriptsize$\pm$0.36}& 28.72{\scriptsize$\pm$0.26}& 25.12{\scriptsize$\pm$0.12}& 23.11{\scriptsize$\pm$0.11}&22.32{\scriptsize$\pm$0.14}& 42.06{\scriptsize$\pm$0.11}& 19.58{\scriptsize$\pm$0.04}& 17.15{\scriptsize$\pm$0.19}& 14.03{\scriptsize$\pm$0.34}&13.33{\scriptsize$\pm$0.29}\\
 ARD& 58.13{\scriptsize$\pm$0.22}&29.71{\scriptsize$\pm$0.14}& 24.97{\scriptsize$\pm$0.13}&22.22{\scriptsize$\pm$0.27}& 20.84{\scriptsize$\pm$0.12}& 55.69{\scriptsize$\pm$0.49}& 30.13{\scriptsize$\pm$0.01}& 26.62{\scriptsize$\pm$0.07}& 22.09{\scriptsize$\pm$0.18}&19.90{\scriptsize$\pm$0.10}\\
  IAD& 57.59{\scriptsize$\pm$0.13}& 29.85{\scriptsize$\pm$0.30}& 25.21{\scriptsize$\pm$0.06}& 22.41{\scriptsize$\pm$0.13}&21.00{\scriptsize$\pm$0.14}& 53.75{\scriptsize$\pm$0.49}& 29.85{\scriptsize$\pm$0.14}& 26.95{\scriptsize$\pm$0.16}& 22.40{\scriptsize$\pm$0.12}&20.56{\scriptsize$\pm$0.30}\\
 RSLAD & 56.68{\scriptsize$\pm$0.34}&30.87{\scriptsize$\pm$0.18}& 27.27{\scriptsize$\pm$0.07}&23.91{\scriptsize$\pm$0.09}&22.66{\scriptsize$\pm$0.19}& 53.18{\scriptsize$\pm$0.11}& 30.14{\scriptsize$\pm$0.27}& 27.69{\scriptsize$\pm$0.03}& 22.86{\scriptsize$\pm$0.04}&21.42{\scriptsize$\pm$0.13}\\
  AKD& 58.22{\scriptsize$\pm$0.16}& 29.14{\scriptsize$\pm$0.19}& 24.35{\scriptsize$\pm$0.03}& 21.80{\scriptsize$\pm$0.06}&20.61{\scriptsize$\pm$0.13}& 54.48{\scriptsize$\pm$0.37}& 27.62{\scriptsize$\pm$0.04}& 23.59{\scriptsize$\pm$0.33}& 19.56{\scriptsize$\pm$0.16}&17.73{\scriptsize$\pm$0.12}\\
 AdaAD &58.57{\scriptsize$\pm$0.49}&31.72{\scriptsize$\pm$0.04}& 28.00{\scriptsize$\pm$0.10}&24.40{\scriptsize$\pm$0.48}& 23.15{\scriptsize$\pm$0.30}& \underline{57.26}{\scriptsize$\pm$0.11}& 31.81{\scriptsize$\pm$0.18}& 28.80{\scriptsize$\pm$0.15}& 23.64{\scriptsize$\pm$0.09}&22.11{\scriptsize$\pm$0.05}\\
 IGDM& 56.36{\scriptsize$\pm$0.32}&32.95{\scriptsize$\pm$0.13}& 29.68{\scriptsize$\pm$0.08}&25.91{\scriptsize$\pm$0.12}&  24.81{\scriptsize$\pm$0.28}& 57.15{\scriptsize$\pm$0.08}& 31.98{\scriptsize$\pm$0.18}& 29.02{\scriptsize$\pm$0.17}& 23.94{\scriptsize$\pm$0.04}&22.52{\scriptsize$\pm$0.11}\\
 \textbf{SAAD-C}& \textbf{59.57}{\scriptsize$\pm$0.66}& \textbf{36.05}{\scriptsize$\pm$0.38}& \underline{32.52}{\scriptsize$\pm$0.31}& \underline{28.72}{\scriptsize$\pm$0.38}& \underline{27.21}{\scriptsize$\pm$0.34}& \textbf{57.33}{\scriptsize$\pm$0.16}& \underline{33.06}{\scriptsize$\pm$0.38}& \underline{29.62}{\scriptsize$\pm$0.23}& \underline{24.16}{\scriptsize$\pm$0.32}&\underline{22.69}{\scriptsize$\pm$0.35}\\
 \textbf{SAAD}& \underline{59.11}{\scriptsize$\pm$0.28}&\underline{36.01}{\scriptsize$\pm$0.13}& \textbf{32.71}{\scriptsize$\pm$0.21}&\textbf{29.36}{\scriptsize$\pm$0.16}& \textbf{27.58}{\scriptsize$\pm$0.16}& 57.16{\scriptsize$\pm$0.36}& \textbf{33.26}{\scriptsize$\pm$0.32}& \textbf{29.95}{\scriptsize$\pm$0.13}& \textbf{24.87}{\scriptsize$\pm$0.29}&\textbf{23.42}{\scriptsize$\pm$0.23}\\
 \bottomrule
 \end{tabular}
 \end{adjustbox}
 \label{tab:supp_statistical_exp2}
\end{table*}

\subsection{Impact of Sample-wise Weighting Hyperparameters}

As shown in \Cref{tab:ablation_alpha_beta}, increasing the \(\beta\) value enhances clean accuracy by placing greater emphasis on clean distillation.
While small values of \(\beta\) lead to modest improvements in clean accuracy with minimal reduction in adversarial robustness, overly large \(\beta\) values cause substantial drops in both PGD and AutoAttack performance.
This trade-off suggests that a moderate value offers the best balance between clean accuracy and robustness.

To explicitly demonstrate the superiority of this trade-off mechanism, we compare SAAD-C against an unweighted clean distillation baseline. Many existing adversarial distillation baselines \citep{ard,rslad, adaad} conceptually incorporate a clean distillation term in their objective:
\begin{equation}
\mathcal{L}_{\mathrm{baseline}} = \mathcal{L}_{\mathrm{AD}} + \frac{1}{N} \sum_{i=1}^{N} \beta \cdot \mathrm{KL}(f_T(\mathbf{x}_i) \parallel f_S(\mathbf{x}_i))
\end{equation}
However, prior works typically disable this term (i.e., setting \(\beta = 0\)) because uniformly increasing \(\beta\) results in an unfavorable trade-off, where the loss in robustness outweighs the gains in clean accuracy.
To address this limitation, SAAD-C selectively applies the clean distillation term as defined in \Eqref{eq:saad_c}. By introducing the \((1 - \tilde{w}_i)\) weight, SAAD-C restricts the clean distillation constraint mainly to non-TAS samples. To explicitly demonstrate this difference, we added an experiment sweeping \(\beta\) for the unweighted baseline (denoted as `SAAD + Clean KD') and compared against SAAD-C in \Cref{fig:tmlr_pareto_curve}. While the SAAD + Clean KD baseline experiences a robustness drop to gain clean accuracy, a better clean-robustness trade-off can be observed in SAAD-C thanks to the sample-wise weighting mechanism. This confirms that SAAD-C effectively recovers clean accuracy without severely compromising adversarial robustness, unlike standard clean distillation.

\begin{table}[t]
 \setlength{\tabcolsep}{5.5pt}
  \renewcommand{\arraystretch}{0.93}
\centering
\begin{minipage}{0.48\linewidth}
\caption{Effect of $\beta$ on CIFAR-10 with the \texttt{Gowal2021Improving} teacher and ResNet-18 student.}
\label{tab:ablation_alpha_beta}
\centering
\begin{tabular}{l|ccccc}

  \toprule
\multicolumn{1}{c}{\(\beta\)}  & Clean  & FGSM & PGD  & C\&W & AA \\
  \midrule
  0 & 83.69  &59.74& 52.45  &52.36& 50.35 \\
  0.05 & 84.72  &60.13& 52.77  &52.32& 50.19 \\
  0.1  & 85.49  &59.70& 52.45  &52.18& 50.14 \\
 0.15& 85.91& 60.21& 52.45& 52.21&49.90\\
  0.2  & 86.39&60.55& 51.91&52.06& 49.72\\
  0.25  & 87.93&59.40& 49.18&49.36& 47.02\\
 0.3& 88.13& 57.79& 44.88& 45.17&42.65\\
  0.5  & 88.19  &56.44& 42.65  &43.06& 40.78 \\
  \bottomrule
\end{tabular}
\end{minipage}
\hfill
\begin{minipage}{0.48\linewidth}
\caption{AD with SWA results on CIFAR-10 with the \texttt{Gowal2021Improving} teacher and ResNet-18 student.}
\label{tab:supp_swa}
\centering
\begin{tabular}{lccccc}
\toprule
Method & Clean & FGSM & PGD & C\&W & AA \\
\midrule
ARD & 85.33 & 58.06 & 48.69 & 47.68 & 46.12 \\
IAD & 85.01 & 58.39 & 48.94 & 47.78 & 46.10 \\
RSLAD & 84.74 & 59.96 & 49.40 & 48.16 & 46.60 \\
AKD & 85.20 & 57.95 & 48.51 & 47.44 & 46.04 \\
AdaAD & \underline{86.11}& 56.70 & 48.08 & 48.05 & 46.19 \\
IGDM & 86.09 & 58.85 & 50.13 & 49.83 & 48.18 \\
SAAD-C & \textbf{86.39}& \textbf{60.55}& \underline{51.91}& \underline{52.06}& \underline{49.72}\\
SAAD & 83.69& \underline{59.74}& \textbf{52.89}& \textbf{52.36}& \textbf{50.35}\\
\bottomrule
\end{tabular}
\end{minipage}
\end{table}

\begin{figure}[t]
    \centering
    \begin{minipage}[b]{0.48\linewidth}
        \centering
        \includegraphics[width=\linewidth]{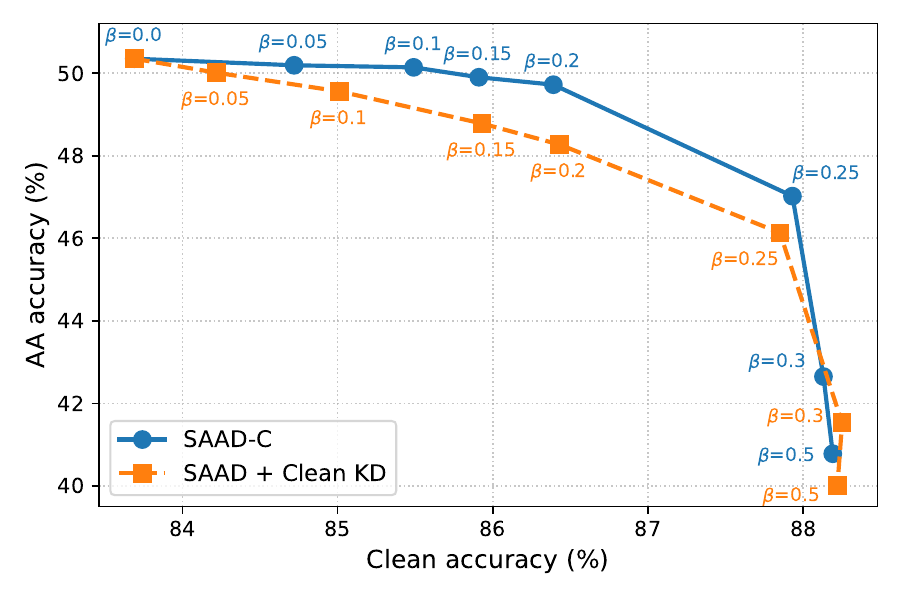}
        \caption{Clean vs. AutoAttack accuracy Pareto curve across varying $\beta$ on CIFAR-10 with the \texttt{Gowal2021Improving} teacher and ResNet-18 student. SAAD-C achieves a more favorable trade-off than the unweighted clean distillation baseline.}
        \label{fig:tmlr_pareto_curve}
    \end{minipage}
    \hfill
    \begin{minipage}[b]{0.48\linewidth}
        \centering
        \includegraphics[width=\linewidth]{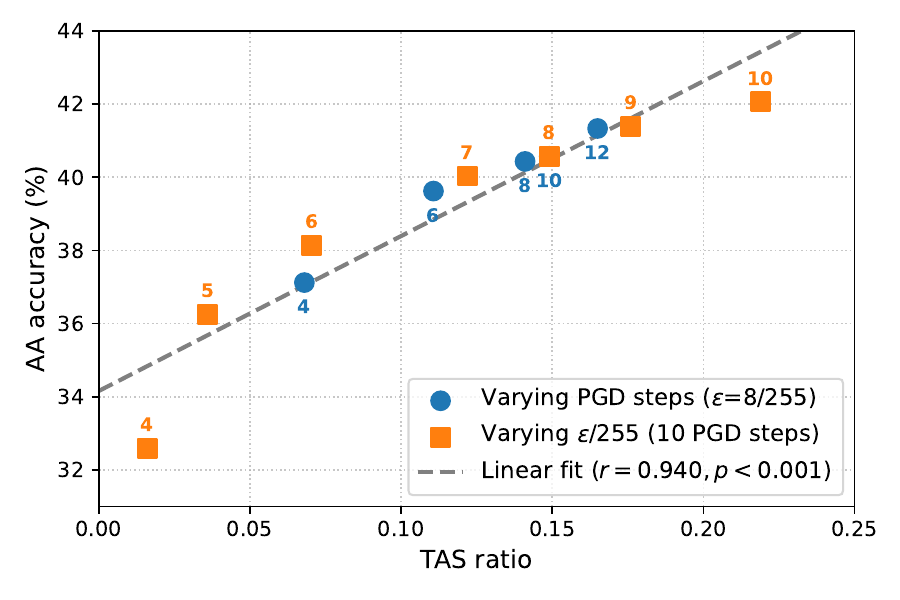}
        \caption{Correlation between TAS ratio and AA accuracy across varying PGD inner steps and perturbation bounds ($\epsilon$). The linear trend suggests that TAS ratio is closely associated with robustness in this diagnostic sweep.}

        \label{fig:tmlr_sweeppgd}
    \end{minipage}
\end{figure}

\subsection{Effect of Attack Strength on Transferability and Robustness}

To investigate the impact of attack strength on robustness transfer, we conducted two sets of experiments on CIFAR-10 with a ResNet-18 student and the \texttt{Gowal2021Improving} teacher. Specifically, we varied the number of PGD inner steps and the maximum perturbation bound $\epsilon$ during student training. As shown in  \Cref{fig:tmlr_sweeppgd}, increasing the attack strength consistently elevates the TAS ratio. Furthermore, we observe a linear correlation between the TAS ratio and the final AutoAttack accuracy. 
This suggests that improved transferability of student-crafted adversarial examples is closely associated with stronger student robustness in this setting.

\subsection{SWA analysis}
To ensure consistency, we apply the SWA technique across all adversarial distillation baselines and evaluate their performance under identical conditions.
As shown in \Cref{tab:supp_swa}, applying SWA generally improves the robustness of existing AD methods to some extent.
Nevertheless, both SAAD and SAAD-C consistently outperform all baselines, achieving the highest robustness.
This indicates that the observed gains are not merely due to SWA but rather attributable to the design of our proposed distillation framework.

\begin{table}[t]
 \setlength{\tabcolsep}{2pt}
  \renewcommand{\arraystretch}{0.87}
\centering
\begin{minipage}{0.61\linewidth}
\centering
\caption{Compatibility of SAAD weighting with other AD methods. We report test accuracy (\%) on CIFAR-10 with ResNet-18 student distilled from the \texttt{Gowal2021Improving} teacher.}
\label{tab:saad_module}
\begin{tabular}{lccccc}
\toprule
Method & Clean & FGSM & PGD & C\&W & AA \\
\midrule
ARD & 84.39 & 52.47 & 42.18 & 42.22 & 40.79 \\
ARD + SAAD weighting & 81.95 & 56.48 & 50.04 & 50.39 & 48.09 \\
\midrule
RSLAD & 83.83 & 51.59 & 42.03 & 42.22 & 40.57 \\
RSLAD + SAAD weighting & 81.74 & 57.03 & 50.54 & 50.46 & 48.53 \\
\midrule
AdaAD & 85.04 & 53.85 & 44.65 & 44.90 & 43.27 \\
AdaAD + SAAD weighting & 84.16 & 58.93 & 52.16 & 51.99 & 49.97 \\
\midrule
IGDM & 85.67 & 58.14 & 48.58 & 46.98 & 44.76 \\
SAAD-C & 86.39 & 60.55 & 51.91 & 52.06 & 49.72 \\
SAAD & 83.69 & 59.74 & 52.89 & 52.36 & 50.35 \\
\bottomrule
\end{tabular}
\end{minipage}
\hfill
\begin{minipage}{0.36\linewidth}
  \caption{Throughput (epochs/hour) and peak GPU memory (MB) on CIFAR-10.}
  
  \setlength{\tabcolsep}{1pt}
  \renewcommand{\arraystretch}{1.1}
  \begin{tabular}{lrr}
    \toprule
    Method      & \multicolumn{1}{c}{ Epochs/hour} & \multicolumn{1}{c}{Memory (MB)} \\
    \midrule
    ARD         & 17.76      &  4670       \\
    IAD         & 10.93      &  4670       \\
    RSLAD       & 10.93      &  4670       \\
    AKD         & 17.85      &  4670       \\
    AdaAD       &  1.23      & 26795       \\
    IGDM        &  1.17      & 26795       \\
    \textbf{SAAD-C} &  4.73  & 26301       \\
    \textbf{SAAD}   &  4.83  & 26301       \\
    \bottomrule
  \end{tabular}
  \label{tab:supp_compute_resource}
\end{minipage}
\end{table}

\begin{figure}[htbp]
    \centering
    \begin{subfigure}{0.33\textwidth}
        \centering
        \includegraphics[width=\linewidth]{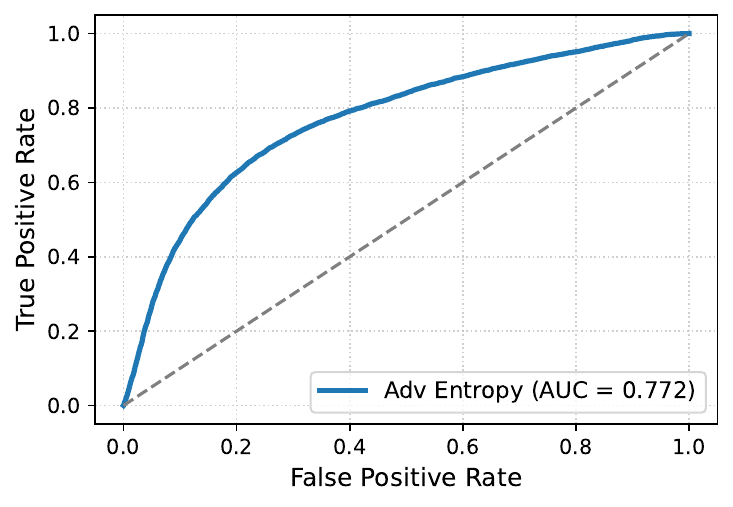}
        \caption{ROC Curve}
        \label{fig:tas_roc}
    \end{subfigure}\hfill
    \begin{subfigure}{0.33\textwidth}
        \centering
        \includegraphics[width=\linewidth]{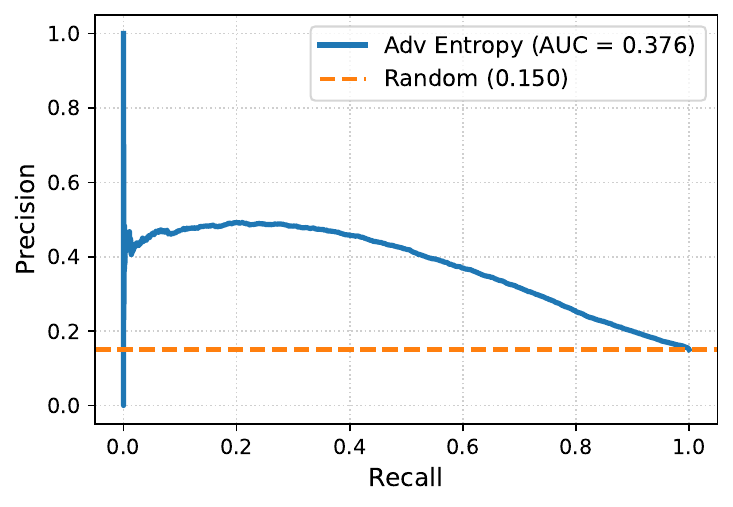}
        \caption{PR Curve}
        \label{fig:tas_pr}
    \end{subfigure}\hfill
    \begin{subfigure}{0.33\textwidth}
        \centering
        \includegraphics[width=\linewidth]{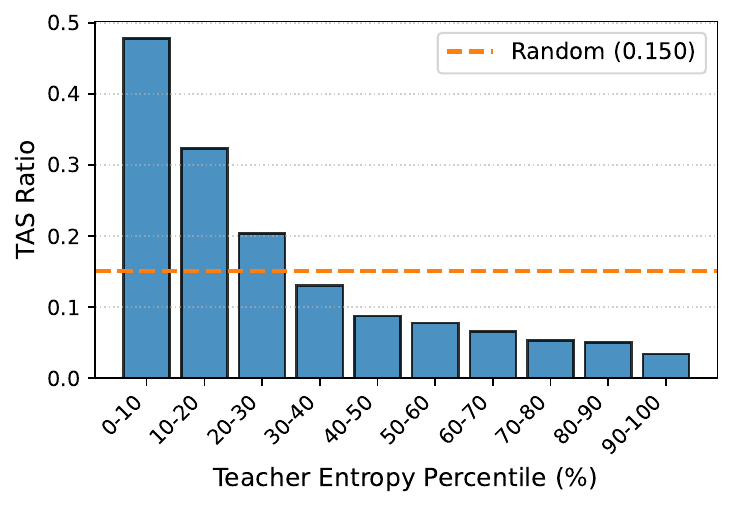}
        \caption{TAS Ratio by Entropy}
        \label{fig:tas_bin}
    \end{subfigure}
    \caption{Quantitative evaluation of the entropy proxy on CIFAR-10 using a ResNet-18 student and the \texttt{Gowal2021Improving} teacher. The metrics show that the teacher's entropy correlates with transferability, with non-TAS samples predominantly located in low-entropy regimes.}
    \label{fig:tas_proxy_validation}
\end{figure}

\subsection{Compatibility of SAAD Weighting with Other AD Methods}
SAAD’s entropy-based weighting scheme can be seamlessly integrated into the outer minimization of existing AD methods, as it does not require any modification to their inner optimization or loss formulation. To verify this, we apply the SAAD weighting design on top of ARD, RSLAD, and AdaAD. All experiments follow the same setting in the main paper, i.e., distillation on CIFAR-10 with a ResNet-18 student and the \texttt{Gowal2021Improving} teacher.
In \Cref{tab:saad_module}, applying SAAD weighting to existing methods consistently improves robustness across all attacks. 
However, these variants remain slightly less effective than the full SAAD method.

\subsection{Computational Resource}
\label{sec:supp_computation_resource}
\Cref{tab:supp_compute_resource} reports throughput and peak memory on an NVIDIA A6000 (Ubuntu, Python~3.8, PyTorch). The larger memory usage for AdaAD/IGDM/SAAD is primarily due to backpropagating through the teacher during the inner maximization: IGDM and AdaAD run iterative inner loops with multiple teacher backward passes, whereas SAAD replaces this loop with a first-order approximation requiring only a single backward pass, yielding almost four times speedup while keeping peak memory comparable. 
By contrast, lighter baselines (ARD/AKD/RSLAD) avoid teacher backpropagation in the inner loop and therefore use much less memory and train faster. 
Importantly, the “without incurring additional
computational cost” claim in the main text refers to SAAD’s entropy-based weighting itself. 
Applied to gradient-free on teacher for inner-maximization methods such as ARD and RSLAD, the weighting improves robustness (see \Cref{tab:saad_module}); the weighting does not increase memory or computation.

\subsection{Quantifying the Entropy Proxy}
\label{sec:empirical_proxy_validation}

We evaluate the teacher's entropy $H(f_T(\mathbf{x}+\boldsymbol{\delta}_S))$ as a proxy for adversarial transferability on the CIFAR-10 training set, employing a ResNet-18 student and the \texttt{Gowal2021Improving} teacher. As shown in \Cref{fig:tas_proxy_validation}, the ROC curve yields an AUC of 0.772. The PR curve outperforms the random baseline. The distribution across entropy percentiles further indicates that non-transferable samples are predominantly concentrated in the lowest entropy regimes. These results suggest that the entropy proxy provides a practical continuous signal to identify and down-weight the overconfident predictions.

\subsection{Comparison of Clean and Adversarial Entropy}
\label{sec:comparison_clean_adv}

To validate the use of adversarial entropy $H(f_T(\mathbf{x}+\boldsymbol{\delta}_S))$ over clean entropy $H(f_T(\mathbf{x}))$ for sample-wise weighting, we evaluate both metrics using a ResNet-18 student. \Cref{tab:ablation_clean_adv} presents the robustness outcomes on CIFAR-10. Clean entropy solely measures static uncertainty on unperturbed images, failing to capture the dynamic transferability of student-crafted perturbations and rendering it a sub-optimal proxy. In contrast, adversarial entropy explicitly evaluates the teacher's response to the attack. By integrating this dynamic measure, SAAD achieves superior robustness across all evaluation metrics.

\begin{table}[t]
\centering
  \renewcommand{\arraystretch}{0.87}
\caption{Comparison of clean and adversarial entropy for sample-wise weighting on CIFAR-10 using a ResNet-18 student.}
\label{tab:ablation_clean_adv}
\resizebox{\textwidth}{!}{
\begin{tabular}{llccccc}
\toprule
Teacher & Weighting Metric & Clean & FGSM & PGD & C\&W & AA \\
\midrule
\multirow{2}{*}{\texttt{Bartoldson2024Adversarial}} & Clean Entropy & 80.67& 57.40& 51.85& 49.54& 48.11\\
 & SAAD& 84.27 & 61.44 & 53.39 & 52.39 & 50.34 \\
\midrule
\multirow{2}{*}{\texttt{Gowal2021Improving}} & Clean Entropy & 80.78 & 56.49 & 50.77 & 49.87 & 47.85 \\
 & SAAD& 83.69 & 59.74 & 52.89 & 52.36 & 50.35 \\
\bottomrule
\end{tabular}
}
\end{table}

\subsection{Impact of Underconfident Soft Labels in Adversarial Distillation.}

One potential concern is that underconfident (high-entropy) soft labels may provide noisy supervision and thus harm adversarial distillation. To evaluate this, we analyze the teacher’s prediction reliability in high-entropy regions. For each teacher, we distill a student using RSLAD on CIFAR-10. After training, we generate 10-step PGD adversarial examples on the student from the training set and compute the entropy of the teacher’s output on these perturbed inputs. We then sort the samples by teacher-output entropy on student-generated adversarial inputs (transfer attacks) and report the teacher’s prediction accuracy (transfer adversarial accuracy) on the top 20\%, 40\%, 60\%, and 80\% highest-entropy subsets.
As shown in Table~\ref{tab:entropy_analysis}, transfer adversarial accuracy remains consistently above 94\% even in high-entropy regions, suggesting that underconfident soft labels largely provide stable supervision rather than introducing significant noise. These results support the validity of entropy-based weighting: low-entropy, overconfident predictions are more indicative of high-variance, overfitting-prone samples, whereas high-entropy samples help stabilize training by reducing adversarial variance and enabling more effective knowledge transfer.

While transfer adversarial accuracy is high, it does not reach 100\%, implying that a small fraction of teacher predictions deviate from the ground truth. To investigate whether explicitly correcting such deviations yields further benefits, we incorporate the Error-Corrective Label Swapping (ELS) technique proposed in DGAD~\citep{park2024dynamic}, which swaps the true-label and max-logit probabilities for misclassified cases on both clean and adversarial examples. We apply ELS on top of SAAD and evaluate the resulting student on CIFAR-10 with a ResNet-18 student distilled from the \texttt{Gowal2021Improving} teacher.
The results in Table~\ref{tab:els_results} show that applying ELS yields only marginal changes across all evaluation metrics. While ELS explicitly corrects teacher outputs in misclassified cases, the gains remain negligible. This indicates that underconfident soft labels, despite their uncertainty, do not substantially degrade supervision, and explicit correction offers limited practical benefit. This confirms that the teacher's uncertainty signals are inherently informative rather than noisy, validating our approach of leveraging raw soft labels directly through entropy weighting.

\begin{table}[t]
\centering
\begin{minipage}[t]{0.53\linewidth}
    \centering
    \caption{Transfer adversarial accuracy (\%) of teachers on high-entropy subsets. Even for high-entropy samples, teacher accuracy remains very high, indicating that underconfident labels still provide reliable supervision.}
    \label{tab:entropy_analysis}
    \setlength{\tabcolsep}{1.5pt} 
    \begin{tabular}{lccccc}
    \toprule
    Teacher & Top\,20\% & Top\,40\% & Top\,60\% & Top\,80\% & Average \\
    \midrule
    Teacher A$^{\dagger}$ & 94.50 & 97.06 & 97.99 & 98.46 & 98.73 \\
    Teacher B$^{\ddagger}$ & 97.06 & 98.50 & 99.00 & 99.24 & 99.39 \\
    \bottomrule
    \end{tabular}
    \raggedright
    \footnotesize
    $^{\dagger}$\texttt{Bartoldson2024Adversarial}, 
    $^{\ddagger}$\texttt{Gowal2021Improving}
\end{minipage}
\hfill
\begin{minipage}[t]{0.44\linewidth}
    \centering
    \caption{Impact of ELS on SAAD. ELS leads to only minor changes across all metrics, suggesting that correcting underconfident predictions provides limited additional benefit.}
    \label{tab:els_results}
    \setlength{\tabcolsep}{4pt} 
    \begin{tabular}{lccccc}
    \toprule
    Method & Clean & FGSM & PGD & C\&W & AA \\
    \midrule
    SAAD & 83.69 & 59.74 & 52.89 & 52.36 & 50.35 \\
    + ELS & 83.86 & 59.78 & 52.75 & 52.22 & 50.07 \\
    \bottomrule
    \end{tabular}
\end{minipage}
\end{table}

\subsection{Sensitivity Analysis of Adversarial Variance Estimation}
\label{sec:avar_sensitivity}

To qualify the reliability of our adversarial variance (AVar) estimation, we conduct a sensitivity analysis on CIFAR-10 using a ResNet-18 student across four robust teachers. We systematically vary the key estimation hyperparameters: the number of repetitions $K \in \{2, 3, 5\}$ and the number of splits $N \in \{2, 3, 4, 5\}$. \Cref{tab:avar_sensitivity} presents the estimated AVar values under these configurations. Increasing the number of repetitions $K$ yields negligible differences, confirming that the estimation remains statistically stable across random splits. Conversely, increasing the number of splits $N$ leads to an increase in the absolute AVar values, which is a natural consequence of reducing the subset size $|D_j^{(k)}|$ available for the adversarial training phase. Notably, the relative ordering of AVar among the four teachers remains strictly consistent across all configurations of $K$ and $N$. This invariant ordering demonstrates that the estimation method reliably captures the comparative variance characteristics of different teachers.

\begin{table}[t]
\centering
\caption{Sensitivity analysis of adversarial variance (AVar) estimation across different $K$ and $N$ configurations on CIFAR-10 using a ResNet-18 student.}
\label{tab:avar_sensitivity}
\resizebox{\textwidth}{!}{
\begin{tabular}{lcccccc}
\toprule
Teacher & Default ($K=2, N=2$) & $K=3$ & $K=5$ & $N=3$ & $N=4$ & $N=5$ \\
\midrule
\texttt{Rebuffi2021Fixing} & 0.0267 & 0.0273 & 0.0275 & 0.0441 & 0.0556 & 0.0694 \\
\texttt{Chen2021LTD} & 0.0059 & 0.0061 & 0.0060 & 0.0100 & 0.0140 & 0.0168 \\
\texttt{Bartoldson2024Adversarial} & 0.0834 & 0.0817 & 0.0805 & 0.1152 & 0.1405 & 0.1588 \\
\texttt{Gowal2021Improving} & 0.3058 & 0.3016 & 0.3061 & 0.4495 & 0.5386 & 0.6448 \\
\bottomrule
\end{tabular}
}
\end{table}

\section{Limitations and Future Work}

\paragraph{Limitations}
Our experiments primarily evaluate CIFAR-10, CIFAR-100, and Tiny-ImageNet using standard robust teachers; scaling these findings to larger datasets such as ImageNet requires further investigation. Moreover, while we identify the properties that induce ineffective adversarial distillation, the fundamental mechanism explaining why high-capacity robust models naturally develop IRT behavior remains an open question. Finally, the theoretical lower bound established in \Cref{lem:tas-entropy-lb}, which links TAS and teacher uncertainty, can become vacuous if the minimum class probability approaches zero, causing the logarithmic term to diverge. Extending this analysis into a comprehensive formal framework with broader theoretical guarantees remains a vital open challenge.

\paragraph{Discussion}
Approaching this formal challenge is non-trivial, as a general theoretical understanding of adversarial training remains limited due to the non-convex min--max objective; thus, most analyses focus on simplified settings \citep{charles2019convergence, li2020implicit, zou2021provable, javanmard2022precise, chen2023benign, tanner2024high}. While our study is primarily empirical, these frameworks offer a valuable interpretive lens. Specifically, works on margin geometry and trade-offs \citep{tsipras2018robustness, javanmard2022precise, tanner2024high} could explain why highly robust teachers produce qualitatively different, overconfident output distributions that potentially dictate TAS concentration. 
Furthermore, recent theoretical work highlights that robust learning under adversarial conditions can remain challenging even in stylized regimes \citep{balcan2023reliable} and is not automatically resolved by scaling model capacity \citep{vilucchio2025existence}, aligning with our observation that capacity gaps alone cannot fully explain distillation failures. Finally, the non-robust features framework \citep{ilyas2019adversarial} helps clarify when student-crafted perturbations will or will not fool the teacher, suggesting non-TAS samples might naturally emerge when students leverage predictive yet teacher-misaligned features.

\paragraph{Future Work}
Motivated by these theoretical connections, a critical research direction is to transition from empirical diagnostics to a fundamental causal understanding of adversarial distillation. Specifically, identifying the architectural, optimization, and data-regime conditions under which robust teachers emerge as ERTs or IRTs is essential. As suggested by \citet{allen-zhu2023towards} and \citet{li2025adversarial, li2025on}, adopting a feature-learning perspective is particularly promising for resolving these questions. Under this perspective, we hypothesize at the sample level that a small subset of intrinsically hard training examples elicits high-entropy, noise-like guidance from ERTs, which the student effectively ignores. Conversely, IRTs enforce overly confident supervision on complex features beyond the representational capacity of the student. This capacity mismatch compels the student to memorize spurious noise to match the teacher's targets, providing vulnerable attack directions that ultimately drive robust overfitting.

\end{document}